\newtheorem{definition}{Definition}
\newtheorem{lemma}{Lemma}
\newtheorem*{proof*}{Proof}
\newtheorem{theorem}{Theorem}
\newtheorem{corollary}{Corollary}
\newtheorem{assumption}{Assumption}
\newcommand*\rot{\rotatebox{90}}
\newcolumntype{R}[2]{%
    >{\adjustbox{angle=#1,lap=\width-(#2)}\bgroup}%
    l%
    <{\egroup}%
}
\newcommand{\newrev}[1]{{\color{blue}#1}} 
\newcommand{\rev}[1]{{#1}} 
\newcommand{\xr}[1]{{#1}} 
\newcommand{\revv}[1]{{#1}} 
\newcommand{\com}[1]{\textbf{\color{red}(COMMENT: #1)}} 
\newcommand{\mcom}[1]{\textbf{\color{purple}(MahdiCom: #1)}} 
\newcommand{\edt}[1]{\textbf{\color{magenta}#1}} 
\newcommand{\clar}[1]{\textbf{\color{green}(NEED CLARIFICATION: #1)}}
\newcommand{\newrev}[1]{#1} 
\newcommand{\rev}[1]{#1}
\newcommand{\revv}[1]{#1}
\newcommand{\xr}[1]{#1}
\newcommand{\com}[1]{}
\newcommand{\mcom}[1]{}
\newcommand{\edt}[1]{}
\newcommand{\clar}[1]{}
\title{Improving Fairness and Privacy in   Selection Problems}
\author {
        Mohammad Mahdi Khalili,\textsuperscript{\rm 1}
        Xueru Zhang, \textsuperscript{\rm 2}
        Mahed Abroshan, \textsuperscript{\rm 3}
        Somayeh Sojoudi \textsuperscript{\rm 4}\\
}
\begin{document}

\maketitle

\begin{abstract}
\revv{Supervised learning models have been increasingly used for making decisions about individuals in applications such as hiring, lending, and college admission. These models may inherit pre-existing biases from training datasets and discriminate against protected attributes (e.g., race or gender). In addition to unfairness, privacy concerns also arise when the use of models reveals sensitive personal information. Among various privacy notions, differential privacy has become popular in recent years. In this work, we study the possibility of using a differentially private exponential mechanism as a post-processing step to improve both fairness and privacy of supervised learning models. Unlike many existing works, we consider a scenario where a supervised model is used to select a limited number of applicants as the number of available positions is limited. This assumption is well-suited for various scenarios, such as job application and college admission. We use ``equal opportunity'' as the fairness notion and show that the exponential mechanisms can make the decision-making process perfectly fair. Moreover, the experiments on real-world datasets show that the exponential mechanism can improve both privacy and fairness, with  a slight decrease in accuracy compared to the model without post-processing.}
\end{abstract}
\textbf{Area:}  ML: Ethics -- Bias, Fairness, Transparency \& Privacy

\section{Introduction}

Machine learning (ML) algorithms 
trained based on real-world datasets have been used in various decision-making applications (e.g., job applications and criminal justice). Due to the pre-existing bias in the datasets, the decision-making process can be biased against protected attributes (e.g., race and gender). 
For example, COMPAS (Correctional Offender Management Profiling for Alternative Sanctions) recidivism prediction tool used as part of inmate parole decisions by courts in the United States has been shown to have a substantially higher false positive rate on African Americans compared to \revv{White people} \cite{COMPAS}.  
\xr{In speech recognition, products} such as Amazon's Alexa and Google Home can have accent bias, with Chinese-accented and Spanish-accented English hardest to understand \cite{accent}. Amazon had been using automated software since 2014 to assess applicants’ resumes,  which was found to be biased against women \cite{amazon}.


There are various potential causes for \xr{such discrimination. Bias can be introduced when the data is collected. }
For instance, if a group (i.e., a majority group) contributes more to the dataset as compared to another group (i.e., a minority group), then the model trained based on the dataset could be biased in favor of the majority group, and this group may experience a higher accuracy. Even if the data collection procedure is unbiased, the data itself may be biased, e.g., the labels in the training dataset \rev{may exhibit bias} if they are provided based on an agent's opinion \cite{jiang2019identifying}. 

The fairness issue has been studied extensively in the literature.
\xr{A variety of fairness notions have been proposed to measure the unfairness, and they can be roughly classified into two families:}
 \textit{individual fairness} and \textit{group fairness}.  Individual fairness is in pursuit of equity in \xr{the} individual-level, that it requires any two similar individuals to be treated similarly \cite{biega2018equity,jung2019eliciting,gupta2019individual}. Group fairness aims to achieve a certain balance in \xr{the} group-level, that the population is partitioned into a small number of protected groups and it requires \revv{a} certain statistical measure (e.g.,  positive classification rates,  true positive rates,  etc.) to be approximately equalized across different protected groups \cite{zhang2019group,hardt2016equality,conitzer2019group,zhang2020long,zhang2020fair}.  \xr{There are mainly three approaches to improving fairness}
 \cite{zhang2020fairness}:   

\begin{itemize}
\item[i)] \textit{Pre-processing:} modifying the training datasets to remove the discrimination before training an ML model \cite{kamiran2012data,zemel2013learning}; 

\item[ii)] \textit{In-processing:} imposing certain fairness  criterion or modifying the loss 
function during 
the training process 
 \cite{agarwal2018reductions,zafar2019fairness};

\item[iii)] \textit{Post-processing:} altering the output of an existing algorithm to satisfy the fairness requirements \cite{hardt2016equality,Geoff}.
\end{itemize} 
In this work, we focus on \rev{group fairness} and use the \xr{post-processing} \rev{approach} to improving the fairness of a supervised \xr{learning} algorithm. 

In addition to unfairness issues, privacy concerns may incur
when making decisions based on \xr{individuals'} sensitive data.
Consider a lending 
\xr{scenario where  the loan approval decision is made based on an applicant's credit score.} 
The decision-making outcome can reflect the applicant’s financial situation and hence compromise his/her privacy. 

Various privacy-preserving techniques have emerged in recent years to protect individual privacy, \xr{such as} 
randomizing or anonymizing sensitive data \cite{liu2017personalized,sweeney2002k,zhu2004optimal,wang2012randomized}. Among them, differential privacy \cite{dwork2006differential} as a statistical notion of privacy has been extensively studied and deployed in practice. It ensures that no one, \xr{from an algorithm's outcome,} 
can infer with substantial higher confidence than random guessing whether a particular individual's data was included in the data analysis.
Various mechanisms have been developed to generate differentially private outputs such as \textit{exponential mechanism} \cite{mcsherry2007mechanism} and \textit{Laplace mechanism} \cite{dwork2006calibrating}. 

In this paper, we consider a scenario where a decision-maker (e.g., company) aims to accept $m\geq 1$ people from an applicant pool. This scenario can be referred to as a \textit{selection problem}. Each applicant has a hidden qualification state (e.g., capability for tasks) and observable features (e.g., GPA, interview performance). Suppose there is a supervised learning model that has been trained in advance and can be used for assigning each applicant a qualification score based on the features and queried by the decision-maker. These qualification scores can represent the likelihood that applicants are qualified, and the decision-maker selects $m$ applicants solely based on the qualification scores. During this process, privacy and fairness concerns may arise. As such, the decision maker's goal is to select $m$ applicants among those that are most likely to be qualified, and at the same time, preserve individual privacy and satisfy a fairness constraint.  \rev{Moreover, we allow the already trained supervised learning model to be a ``black-box"; the decision-maker has no access to the model parameters and can only use it to observe an applicant's score.} Within this context, we study the possibility of using an exponential mechanism as a post-processing scheme to improve both the privacy and fairness of the pre-trained supervised learning model. We consider ``equal opportunity'' as the notion of fairness\footnote{We discuss the generalization of our results to demographic parity in the appendix.}  and examine the relationship between fairness, privacy, and accuracy. 

\revv{\textbf{Related work.}} The most related works \xr{of} 
this paper \xr{are} 
\cite{hardt2016equality, jagielski2019differentially, cummings2019compatibility,mozannar2020fair,kleinberg2018selection}. Kleinberg and Raghavan \cite{kleinberg2018selection} \xr{study the effects of implicit bias on selection problems, and explore the role of the \textit{Rooney Rule} in this process.} 
 They show that the Rooney Rule can improve \xr{both the decision maker's utility and the disadvantaged group's representation. However, neither the fairness notion nor the privacy issues are considered in this work.}
 Hardt \textit{et al.} in \cite{hardt2016equality} introduce the notion of equal opportunity and develop a post-processing algorithm to improve the fairness of a supervised learning model. This work solely focuses on the fairness issue and does not provide any privacy guarantee. 
 Cummings \textit{et al.} in \cite{cummings2019compatibility} 
 study fairness in supervised learning and its relationship to differential privacy. In particular, they show that it is \textit{impossible} to train a differentially private classifier that satisfies exact (perfect) fairness and achieves a higher accuracy than a constant classifier. \revv{Therefore, many} works in the literature have focused on \xr{developing} approximately fair and differentially private algorithms. For instance, \cite{xu2019achieving} introduces an algorithm to train a differentially private logistic regression model that is approximately fair. 
 Jagielski \textit{et al.} \cite{jagielski2019differentially} develop post-processing and in-processing algorithms to train a classifier that satisfies approximate fairness and protects the privacy of protected attributes (not the training data).  \cite{mozannar2020fair} considers the notion of \textit{local} differential privacy and \xr{aims }
 to learn a \xr{fair} 
 supervised model \xr{from the data with the noisy and differentially private protected attributes. Similarly, }
 \cite{wang2020robust, kallus2019assessing, awasthi2020equalized} focus on fair learning  using noisy protected attributes \xr{but} without a privacy guarantee.

\textbf{Main contributions. }Most of the existing work 
aims to learn a model that minimizes the expected loss (e.g., classification error) over the entire population under certain fairness constraints. In settings such as hiring, lending, and college admission, it means the decision-maker should accept all the applicants as long as they are likely to be qualified. However, this may not be realistic for many real-world applications, when only a fixed number of positions are available, and only a limited number of applicants can be selected. In this paper, we shall consider this scenario where only a fixed number of people are selected among all applicants. Using equal opportunity as the fairness notion and differential privacy as the privacy notion, we identify sufficient conditions under which the exponential mechanism, in addition to a differential privacy guarantee, can also achieve \textit{perfect} fairness. Our results show that the negative result shown in \cite{cummings2019compatibility} (i.e., it is impossible to attain a perfectly fair classifier under differential privacy) does not apply to our setting when the number of acceptance is limited. In summary, our main contributions are as follows: 
\begin{itemize}
\item  We show that although the exponential mechanism has been designed and used mainly for preserving individual privacy, it is also effective in improving  fairness. The sufficient conditions under which the exponential mechanism can achieve \textit{perfect} fairness are identified.

\item We show that the accuracy of a supervised learning model
after using the exponential mechanism is monotonic in privacy leakage, which implies that the improvement of fairness and privacy is at the cost of accuracy. 
\item  Unlike \cite{cummings2019compatibility}, in our setting, we show that compared to other trivial algorithms (e.g., uniform random selection) that are perfectly fair, the exponential mechanism can achieve a higher accuracy while \xr{maintaining} perfect fairness. 
\end{itemize}
The remainder of the paper is organized as follows.  
We present our model in Section \ref{sec:Model}. The relation between fairness, privacy and accuracy is examined in Section \ref{sec:analysis}. The generalization to a scenario with $m$ available positions  is discussed in Section \ref{sec:choosingM}. 
We present the numerical experiments in Section \ref{sec:Num} and conclude the paper in Section \ref{sec:conclusion}.

\section{Model}\label{sec:Model}

Consider a scenario where $n$ individuals indexed by $\mathcal{N} = \{1,2,\cdots,n\}$ apply for some jobs/tasks. Each individual $i$ can be characterized by a tuple $(X_i,A_i,Y_i)$, where $Y_i\in \{0,1\}$ is the hidden qualification state representing whether $i$ is qualified $(Y_i = 1)$ for the position or not $(Y_i = 0)$, $X_i \in \mathcal{X}$ is the observable features and $A_i\in \{0,1\}$ is the protected attribute (e.g., race, gender) indicating the group membership of individual $i$.  Tuples $\{(X_i, A_i, Y_i)|i=1, \ldots, n\}$ are i.i.d. random variables following some distribution $\mathsf{F}$. We allow $X_i$ to be correlated with $A_i$, and it may include $A_i$ as well. The decision-maker observes the applicants' features and aims to select $m$ people that are most likely to be qualified and  satisfy certain privacy and fairness constraints. 
	
\textbf{Pre-trained model and qualification scores. } We assume there is a supervised learning model $r:\mathcal{X}\to \mathcal{R}$ that has been trained in advance and can be queried by the decision-maker.  It takes features of each applicant as input, and outputs a qualification score indicating the likelihood that the applicant is qualified. Let $\mathcal{R} \coloneqq \{\rho_1,\ldots, \rho_{n'}\} \subset [0,1]$ be a set of all possible values for  the qualification score, and define $R_i = r(X_i)$ as individual $i$'s qualification score. The higher $R_i$ implies individual $i$ is more likely to be qualified, i.e., $Y_i = 1$.  Note that $R_i$ depends on $A_i$ through $X_i$ since $X_i$ and $A_i$ are correlated, and $X_i$ may include $A_i$. \xr{Without loss of generality, let 
$\rho_1 = 0$ and $\rho_{n'} = 1$.} 

\textbf{Selection procedure. } Let $\mathbf{D} = (X_1, \ldots, X_n)$ be a database that includes all the applicants' features, and $D$ be its realization. ${D}$ is the only information that the decision-maker can observe about applicants. The decision-maker first generates all applicants' qualification scores using pre-trained model $r(\cdot)$, and then uses these scores  $(R_1, \ldots, R_n)$ to select $m$ individuals.  We first  focus on a case when $m=1$, i.e., only one applicant is selected, even though there could be more than one qualified applicant in the applicant pool.  The generalization to $m > 1$ is studied in Section \ref{sec:choosingM}.

For notational convenience, we further define tuple $(X,A,Y)$ as a random variable that also follows distribution $\mathsf{F}$, and $R = r(X)$. Denote $(x,a,y)$ as a realization of $(X,A,Y)$. Similar to \cite{hardt2016equality}, we assume 
$\mathsf{F}$ can be learned during the training process and is known to the decision-maker.

 \subsection{Differential privacy}\label{subsec:privacy}


Let $x_i\in \mathcal{X}$ be the observable features of individual $i$, 
 and $D=(x_1, x_2,\ldots, x_n)$ be a database which includes all individuals' data. Moreover, $\mathcal{D} = \{(\hat{x}_1,\hat{x}_2,\ldots,\hat{x}_n)| \hat{x}_i \in \mathcal{X} \}$ denotes the set of all possible databases. 
\begin{definition}[Neighboring Databases]
	Two databases $D = (x_1,\ldots, x_n)$ and $D' = (x'_1,\ldots, x'_n)$ are neighboring databases if they differ only in one data point, noted as $D\sim D'$,  i.e., $$
	\exists i \in \mathcal{N} \mbox{ s.t. } x_i\neq x'_i~ \mbox{and} ~ x_j=x'_j~ \forall j \neq i.
$$
\end{definition} 

\begin{definition}[Differential Privacy \cite{dwork2006differential}]
A randomized algorithm $\mathscr{M}$ is 
$\epsilon$-differentially private if for any two neighboring databases $D$ and $D'$ and for any possible set of output $\mathcal{W}\subseteq \text{Range}(\mathscr{M})$, it holds that
$
	\frac{{\Pr}\{\mathscr{M}(D)\in \mathcal{W}\}}{{\Pr}\{\mathscr{M}(D')\in \mathcal{W}\}} \leq \exp\{\epsilon\}.
$
\end{definition}
Privacy parameter $\epsilon\in [0,\infty)$ can be used to measure privacy leakage; the smaller $\epsilon$ corresponds to the stronger privacy guarantee. For sufficiently small $\epsilon$, the distribution of  output remains almost the same as a single data point in the database changes. It suggests that an attacker cannot infer the input data with high confidence after observing the output; thus, individual privacy is preserved. Next, we introduce a notable 
mechanism that can achieve differential privacy.
\begin{definition}[Exponential mechanism \cite{mcsherry2007mechanism}]
Denote $\mathcal{O}= \{o_1, \ldots, o_{\hat{n}}\}$ as the set of all possible outputs of algorithm $\mathscr{M}$, and $v:\mathcal{O}\times \mathcal{D}\rightarrow \mathbb{R}$ as a score function, where \rev{a higher value of $v(o_i,D)$ implies that output $o_i$ is more appealing under database $D$. Let $\Delta =  \max_{i,D\sim D'} |v(o_i,D)-v(o_i,D')|$ be defined as the sensitivity of score function.} 
Then, exponential mechanism  $\mathscr{M}:\mathcal{D}\rightarrow \mathcal{O}$ that satisfies  $\epsilon$-differential privacy  selects \xr{ $o_i \in \mathcal{O}$ } with  probability  
$
		{\Pr}\{\mathscr{M}(D) = o_i\} = \frac{\exp\{\epsilon \cdot \frac{v(o_i,D)}{2\Delta}\}}{\sum_{j=1}^{\hat{n}}\exp\{\epsilon \cdot \frac{v(o_j,D)}{2\Delta}\}}.
$

\end{definition}

 \subsection{Make selections using exponential mechanism}\label{subsec:selection} 
 Given a set of  qualification scores $(R_1,\ldots,R_n)$ generated from a pre-trained model $r(\cdot)$, the decision-maker selects an individual based on them, and meanwhile tries to preserve privacy with respect to database $\mathbf{D} = (X_1, \ldots, X_n)$. To this end, the decision-maker makes a selection using the exponential mechanism with score function $v: \mathcal{N} \times\mathcal{D} \rightarrow [0,1]$,  where $\mathcal{D} = \mathcal{X}^n$ is the set of all possible databases.
 
 One natural choice of the score function would be $v(i,D) = r(x_i)$, i.e., an applicant with a higher qualification score is more likely to be selected. Because $0\leq r(x) \leq 1$, \revv{for all} $x\in \mathcal{X}$, the sensitivity of score function $v(i,D)$ is $\Delta = \max_{i,D\sim D'} |v(i,D)-v(i,D')| = 1$.
 
 Let $\mathscr{A}_{\epsilon}: \mathcal{D}\rightarrow \mathcal{N}$ be an $\epsilon$-differentially private \newrev{exponential mechanism} used by the decision-maker to select one individual. Using $\mathscr{A}_{\epsilon}(\cdot)$, after observing  realizations of $(R_1,R_2,\ldots,R_n)$,  individual \revv{$i\in \mathcal{N}$} is selected with probability 
$\Pr\{\mathscr{A}_{\epsilon}(D) = i \} =   \frac{\exp\{\epsilon\cdot \frac{r_i}{2}\}}{\sum_{j \in \mathcal{N}} \exp\{\epsilon\cdot \frac{r_j}{2}\}},$
 where $r_i$ is the realization of random variable $R_i$. For each individual $i$, define a Bernoulli random variable $I_{i,\epsilon} \in \{0,1\}$ indicating whether $i$ is selected $(I_{i,\epsilon} =1)$ under algorithm $\mathscr{A}_{\epsilon}(\cdot)$ or not $(I_{i,\epsilon} =0)$. We have, 
\begin{eqnarray*}
	&&\Pr\{I_{i,\epsilon} =1\} \\ &=&\sum_{(r_1,\ldots, r_n) \in \mathcal{R}^n } \Pr\left\lbrace I_{i,\epsilon} =1 | \cap_{j=1}^n\{R_j = r_j\}\right\rbrace
	\cdot \prod_{j=1}^n f_R (r_j)\nonumber \\
	&=& \sum_{(r_1,\ldots, r_n) \in \mathcal{R}^n } \frac{\exp\{\epsilon\cdot \frac{r_i}{2}\}}{\sum_{j =1}^n \exp\{\epsilon\cdot \frac{r_j}{2}\} } \cdot \prod_{j=1}^n f_R (r_j),
\end{eqnarray*}
where $f_R(\cdot)$ is the probability mass function (PMF) of random variable $R$.
If we further define random variable 
$ Z_{i,\epsilon} = \frac{\exp\{\epsilon\cdot \frac{R_i}{2}\}}{\sum_{j \in \mathcal{N}} \exp\{\epsilon\cdot \frac{R_j}{2}\} }$ \newrev{and denote the expectation of $Z_{i,\epsilon}$ by $E\left\lbrace Z_{i,\epsilon}\right\rbrace$,}   
then 
$E\left\lbrace Z_{i,\epsilon}\right\rbrace = \Pr\left\lbrace I_{i,\epsilon} = 1\right\rbrace$ \xr{holds}.

\subsection{Fairness metric}\label{subsec:fairness} 
 Based on the protected attribute $A$, $n$ applicants can be partitioned into two groups. To measure the unfairness between two groups resulted from using algorithm $\mathscr{A}_{\epsilon}(\cdot)$, we shall adopt a group fairness notion. For the purpose of exposition, we focus on one of the most commonly used notions called \textit{equal opportunity} \cite{hardt2016equality}. The generalization to  \textit{demographic parity} fairness \cite{dwork2012fairness} is discussed in the appendix.
	
In binary classification, \textit{equal opportunity} fairness requires that the true positive rates experienced by different groups to be equalized, i.e., $\Pr\{\widehat{Y}=1|A=0,Y=1\}=\Pr\{\widehat{Y}=1|A=1,Y=1\}$, where $\widehat{Y}$ is the predicted label by the classifier. In our problem when the number of acceptance is $m=1$, this definition can be adjusted as follows,   
\begin{definition}[Fairness metric]\label{def:fairness}
	Consider an algorithm $\mathscr{M}:\mathcal{D}\to \mathcal{N}$ that selects one individual from $n$ applicants. Given database $\mathbf{D}$ and $\mathscr{M}(\cdot)$, for all $i\in \mathcal{N}$ define a Bernoulli random variable  $K_i$ such that $K_i = 1$ if \xr{$\mathscr{M}(\mathbf{D}) = i$} and  $K_i = 0$ \revv{otherwise}.  Then algorithm $\mathscr{M}(\cdot)$ is $\gamma$-fair if
	{\small
		\begin{eqnarray}
		\Pr\{K_{i} =1 | A_i=0,Y_i = 1\} - \Pr\{ K_{i} = 1| A_i = 1, Y_i = 1\} = \gamma\revv{.} \nonumber  
		\end{eqnarray}}
\end{definition}
	Note that $-1\leq \gamma\leq 1$, and negative (\xr{resp. }positive) $\gamma$ implies that algorithm $\mathscr{M}(\cdot)$ is biased in favor of the group with protected attribute $A=1$ (\xr{resp. }$A = 0$). In particular, we say $\mathscr{M}(\cdot)$ is \textbf{\textit{perfectly fair}} if  $\gamma = 0$.\footnote{Note that if algorithm $\mathscr{M}(\cdot)$ selects an individual solely based on i.i.d qualification scores $(R_1,\ldots,R_n)$ and does not differentiate individuals based on their indexes, then  $ \Pr\{K_{i} =1 | A_i=0,Y_i = 1\} - \Pr\{ K_{i} = 1| A_i = 1, Y_i = 1\} =   \Pr\{K_{j} =1 | A_j=0,Y_j = 1\} - \Pr\{ K_{j} = 1| A_j = 1, Y_j = 1\}, \forall i,j$.  }

For algorithm $\mathscr{A}_{\epsilon}(\cdot)$ in Section \ref{subsec:selection} that selects an individual using the exponential mechanism, $\gamma$ in above definition can be equivalently written as 
	{\small	\begin{equation}\label{def:Fairness}
	E\left\lbrace  Z_{i,\epsilon } |A_i=0,Y_i =1 \right\rbrace -E\left\lbrace  Z_{i,\epsilon } |A_i=1 ,Y_i =1 \right\rbrace = \gamma .
	\end{equation}}
\vspace{-0.75cm} 
\subsection{Accuracy metric}\label{sec:Accuracy}
\revv{It is easy to develop trivial algorithms that are both $0$-differentially private and $0$-fair. For example, $\mathscr{A}_{0}(\cdot)$ which selects one individual randomly and uniformly from $\mathcal{N}$, or a deterministic algorithm that always selects a particular individual. However, 
 \xr{both} algorithms do not use qualification scores 
  \xr{to make decisions.} 
The primary goal of the decision-maker  
\xr{that} selecting \xr{the} most qualified individuals is \xr{thus} undermined. 
We need to introduce another metric to evaluate the ability of $\mathscr{A}_{\epsilon}(\cdot)$ to select qualified individuals. 
}
\begin{definition}[Accuracy]\label{def:accuracy} An algorithm $\mathscr{M}:\mathcal{D}\to \mathcal{N}$ that selects one individual from $n$ applicants is $\theta$-accurate if 
	\begin{equation}\label{eq:accuracy}
	\Pr\{Y_{\mathscr{M}(\mathbf{D})} = 1\} = \theta.
	\end{equation}
\end{definition} 
As an example, for algorithm $\mathscr{A}_{0}(\cdot)$ that selects one individual uniformly at random from $\mathcal{N}$, it is $\theta$-accurate with $\theta = \Pr\{Y_{\mathscr{A}_{0}(\mathbf{D})}=1\} =\Pr\{Y=1\} $. The goal of this work is to examine whether it is possible to use exponential mechanism to achieve both differential privacy and the \textit{perfect} fairness, while maintaining a sufficient level of accuracy. In the next section, we study the relation between fairness $\gamma$, privacy $\epsilon$, and accuracy $\theta$ under \xr{$\mathscr{A}_{\epsilon}(\cdot)$}.

\section{Analysis}\label{sec:analysis}
\subsection{Fairness-Privacy Trade-off}\label{sec:fairnessPrivacy}
To study the relation between the fairness and privacy, let $ \gamma(\epsilon) = E\{Z_{i,\epsilon} | A_i = 0, Y_i =1\}- E\{Z_{i,\epsilon} | A_i = 1, Y_i =1\}$. Note that $\gamma(\epsilon)$ does not depend on $i$ as individuals are i.i.d.  Let $\mathscr{A}(\cdot)$ be the algorithm that selects an individual with the highest qualification score and breaks ties randomly and uniformly if more than one individual have the highest qualification score. Define a set of Bernoulli random variables $\{I_i\}_{i=1}^n$ indicating whether individual $i$ is selected $(I_i = 1)$ or not $(I_i = 0)$ under algorithm $\mathscr{A}(\cdot)$. Let ${N}_{\max}=|\{i\in \mathcal{N}| R_i = \max_j R_j\}|$ be  the number of individuals who have the highest qualification score, then
\begin{eqnarray}
\Pr\{I_i =1 \} = \sum_{k=1}^n \frac{1}{k} \cdot  \Pr\{R_i = \max_j R_j, {N}_{\max}= k \}.
\end{eqnarray}
Define 
\begin{eqnarray}
Z_i = \left\lbrace\begin{array}{ll}
0 &\mbox{if}~ R_i \neq \max_{j} R_j\\
\frac{1}{{N}_{\max}}&o.w.
\end{array}\right. .\end{eqnarray} 
Then it holds that $E\{Z_i\} = \Pr\{I_i =1\}$. The following \revv{lemma} characterizes the relation between random variables $Z_i$ and $Z_{i,\epsilon}$, \revv{which is essential to prove the next theorem.} 
\begin{lemma}[Sure convergence]\label{lem:convegance} Consider two algorithms $\mathscr{A}_{\epsilon}(\cdot)$ and $\mathscr{A}(\cdot)$ and the corresponding random variables $Z_{i,\epsilon}$ and $Z_i$. \revv{The following statements are true,}
	
1. $Z_{i,\epsilon}$ converges surely towards $Z_i$ as $\epsilon \to +\infty$. 

	2. $\mathscr{A}(\cdot)$ is $\gamma_{\infty}$-fair with $\gamma_\infty = \lim_{\epsilon\to+\infty} \gamma(\epsilon)$, i.e.,
	{\scriptsize
		\begin{eqnarray}
		&&\lim_{\epsilon \to +\infty}	E\left\lbrace Z_{i,\epsilon} | A_i = 0,Y_i =1\right\rbrace - E\left\lbrace Z_{i,\epsilon} | A_i = 1,Y_i =1\right\rbrace   \nonumber \\&=&	E\left\lbrace\lim_{\epsilon \to +\infty} Z_{i,\epsilon} | A_i = 0,Y_i =1  \right\rbrace - E\left\lbrace \lim_{\epsilon \rightarrow +\infty} Z_{i,\epsilon} | A_i = 1,Y_i =1  \right\rbrace  \nonumber \\
		&=& E\left\lbrace Z_{i} | A_i = 0,Y_i =1  \right\rbrace - E\left\lbrace  Z_{i} | A_i = 1,Y_i =1  \right\rbrace. \nonumber 
		\end{eqnarray}}
\end{lemma}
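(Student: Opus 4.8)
The plan is to prove the two parts in order, since part~2 is essentially an application of part~1 combined with an interchange of limit and expectation. For part~1 (sure convergence) I would fix an arbitrary realization $(R_1,\ldots,R_n)=(r_1,\ldots,r_n)$ and show $Z_{i,\epsilon}\to Z_i$ for that realization; since the argument works at every sample point, the convergence is sure rather than merely almost sure. The key algebraic step is to factor out the largest score. Writing $M=\max_j r_j$ and dividing the numerator and denominator of $Z_{i,\epsilon}$ by $\exp\{\epsilon M/2\}$ gives
\[
Z_{i,\epsilon}=\frac{\exp\{\epsilon(r_i-M)/2\}}{\sum_{j}\exp\{\epsilon(r_j-M)/2\}}.
\]
Every exponent $r_j-M\le 0$, equaling $0$ exactly when $r_j=M$ and being strictly negative otherwise. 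As $\epsilon\to+\infty$, each term with $r_j<M$ vanishes and each term with $r_j=M$ equals $1$, so the denominator tends to $N_{\max}$. The numerator tends to $1$ when $r_i=M$ and to $0$ otherwise, so $Z_{i,\epsilon}$ tends to $1/N_{\max}$ on the event $\{r_i=M\}$ and to $0$ on $\{r_i\neq M\}$, which is precisely the definition of $Z_i$.

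For part~2, the first and last equalities in the displayed chain are just the definition of $\gamma(\epsilon)$ and the identity $\lim_{\epsilon}Z_{i,\epsilon}=Z_i$ from part~1, so the only nontrivial step is the middle equality, namely passing the limit inside the two conditional expectations. I would justify this in the simplest way available: because the qualification scores take values in the finite set $\mathcal{R}$ and there are only $n$ applicants, each conditional expectation $E\{Z_{i,\epsilon}\mid A_i=a,Y_i=1\}$ is a finite sum over the realizations $(r_1,\ldots,r_n)\in\mathcal{R}^n$ of $Z_{i,\epsilon}(r_1,\ldots,r_n)$ weighted by a conditional PMF that does not depend on $\epsilon$. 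Moving the limit through this finite sum is immediate and uses part~1 termwise. If one prefers an argument independent of finiteness, the uniform bound $0\le Z_{i,\epsilon}\le 1$ (each $Z_{i,\epsilon}$ is a ratio of a single exponential to a sum that includes it) lets the bounded convergence theorem do the same job.

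Finally I would connect this to the fairness definition: since $E\{Z_i\mid A_i=a,Y_i=1\}=\Pr\{I_i=1\mid A_i=a,Y_i=1\}$ matches the quantity appearing in Definition~\ref{def:fairness} for the selection indicator of $\mathscr{A}(\cdot)$, the limiting difference equals $\gamma_\infty$, so $\mathscr{A}(\cdot)$ is $\gamma_\infty$-fair. I expect no serious obstacle here: the only point demanding care is the interchange of limit and expectation, and the finiteness of the score set makes even that routine. The substantive content is the elementary limit computation in part~1.
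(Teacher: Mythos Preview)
Your proposal is correct and follows essentially the same route as the paper: pointwise computation of the softmax limit for part~1, and passing the limit through the finite sum over $\mathcal{R}^n$ for part~2. If anything, you are more explicit than the paper, which simply asserts the pointwise limit in part~1 without writing out the factor-out-the-maximum step, and which justifies the interchange in part~2 only via the finite-sum observation (not via bounded convergence).
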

Lemma \ref{lem:convegance} implies that $\lim_{\epsilon \rightarrow +\infty} \gamma(\epsilon)=\gamma_{\infty}$ exists. It shows that  $\mathscr{A}_{\epsilon}(\cdot)$ using exponential mechanism is equivalent to algorithm $\mathscr{A}(\cdot)$ as $\epsilon \rightarrow \infty$. \revv{ In the next theorem, we identify a sufficient condition under which the exponential mechanism can achieve \textit{perfect} fairness with non-zero privacy leakage.}

\begin{theorem}\label{theo:fair}
\revv{There exists} $\epsilon_o >0$ such that $\gamma(\epsilon_o) = 0$ under $\mathscr{A}_{\epsilon_o}(.)$ if both of the following \revv{constraints} are satisfied:

 (1) $E\{Z_i|A_i = a,Y_i = 1 \}  < E\{Z_i|A_i = \neg a,Y_i = 1   \}$,
 
 (2) $ E \left\lbrace R_i | A_i = a, Y_i=1\right\rbrace >  E \left\lbrace R_i | A_i =  \neg a, Y_i=1\right\rbrace$,
\end{theorem}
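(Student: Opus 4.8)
The plan is to apply the intermediate value theorem to the fairness function $\gamma(\epsilon)$ on $[0,\infty)$, using the two endpoints $\epsilon=0$ and $\epsilon\to+\infty$ together with the sign of the one-sided derivative $\gamma'(0)$. Because the score set $\mathcal{R}$ is finite, each conditional expectation $E\{Z_{i,\epsilon}\mid A_i=a,Y_i=1\}$ is a \emph{finite} sum over $\mathcal{R}^n$ of terms that are smooth in $\epsilon$; hence $\gamma$ is continuous (indeed $C^\infty$) on $[0,\infty)$ and may be differentiated term by term. This disposes of any analytic worry about interchanging limits, derivatives, and expectations.

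First I would pin down the two boundary values. At $\epsilon=0$ the exponential weights collapse to $Z_{i,0}=1/n$ for every $i$, so both conditional means equal $1/n$ and therefore $\gamma(0)=0$. At the other end, Lemma~\ref{lem:convegance} gives $\lim_{\epsilon\to+\infty}\gamma(\epsilon)=\gamma_\infty=E\{Z_i\mid A_i=0,Y_i=1\}-E\{Z_i\mid A_i=1,Y_i=1\}$, and Constraint~(1) fixes its sign: taking $a=1$ forces $\gamma_\infty>0$ (and symmetrically $\gamma_\infty<0$ when $a=0$).

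The crux is the behaviour of $\gamma$ just to the right of $0$, which Constraint~(2) controls through $\gamma'(0)$. Differentiating $Z_{i,\epsilon}=\exp\{\epsilon R_i/2\}/\sum_j\exp\{\epsilon R_j/2\}$ and evaluating at $\epsilon=0$ gives $\frac{\partial}{\partial\epsilon}Z_{i,\epsilon}\big|_{\epsilon=0}=\frac{1}{2n}\bigl(R_i-\frac1n\sum_j R_j\bigr)$. Taking conditional expectations and using that the $R_j$ with $j\neq i$ are i.i.d.\ and independent of $(A_i,Y_i)$, so their conditional means all equal the unconditional $E\{R\}$, the $E\{R\}$ contributions become common to both groups and cancel in the difference, leaving
\[
\gamma'(0)=\frac{n-1}{2n^2}\bigl(E\{R_i\mid A_i=0,Y_i=1\}-E\{R_i\mid A_i=1,Y_i=1\}\bigr).
\]
Constraint~(2) then makes $\gamma'(0)$ strictly \emph{opposite} in sign to $\gamma_\infty$ (with $a=1$: $\gamma'(0)<0$ while $\gamma_\infty>0$).

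Finally I would assemble the sign-change argument. Since $\gamma(0)=0$ and $\gamma'(0)<0$, there is a small $\epsilon_1>0$ with $\gamma(\epsilon_1)<0$; since $\gamma(\epsilon)\to\gamma_\infty>0$, there is a large $\epsilon_2>\epsilon_1$ with $\gamma(\epsilon_2)>0$. Continuity on $[\epsilon_1,\epsilon_2]$ and the intermediate value theorem then yield $\epsilon_o\in(\epsilon_1,\epsilon_2)$ with $\gamma(\epsilon_o)=0$; the case $a=0$ is identical with every inequality reversed. The step I would be most careful about is the conditional-expectation bookkeeping for $\gamma'(0)$: one must invoke the i.i.d.\ structure to replace $E\{R_j\mid A_i,Y_i\}$ by $E\{R\}$ for $j\neq i$, so that the common contributions cancel and isolate exactly the group-conditioned score means that Constraint~(2) compares.
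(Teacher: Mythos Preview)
Your proposal is correct and follows essentially the same approach as the paper: establish $\gamma(0)=0$, compute $\gamma'(0)=\frac{n-1}{2n^2}\bigl(E\{R_i\mid A_i=0,Y_i=1\}-E\{R_i\mid A_i=1,Y_i=1\}\bigr)$ to determine the local sign near $0$, invoke Lemma~\ref{lem:convegance} for the sign of $\gamma_\infty$, and conclude via the intermediate value theorem. The paper simply fixes the case $a=0$ (so $\gamma'(0)>0$, $\gamma_\infty<0$) rather than $a=1$, but the argument is otherwise identical; your added justifications for smoothness of $\gamma$ and for the conditional-expectation step (using independence of $R_j$ from $(A_i,Y_i)$ when $j\neq i$) are welcome and make explicit what the paper leaves implicit.
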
  
 where $a\in\{0,1\}$ and $\neg a = \{0,1\}\setminus a$.

\xr{Constraint} (1) above suggests that the applicants with protected attribute $A= \neg a$ are more likely to be selected than those with $A=a$ under algorithm $\mathscr{A}(\cdot)$; \xr{Constraint} (2) implies that \revv{on} average the applicants with protected attribute $A=a$ have \revv{a} higher qualification score than those with $A= \neg a$. These \xr{constraints} may be satisfied when the applicants with $A= \neg a$, as compared to those with $A=a$, have the smaller mean but much larger variance in their qualification cores. In \xr{Sec.} \ref{sec:Num}, we will show FICO credit score dataset \cite{reserve2007report} satisfies those \xr{constraints} for certain social groups.

It is also worth noting that perfect fairness is not always attainable under the exponential mechanism. In the next theorem, we identify sufficient conditions under which it is impossible to achieve the \textit{perfect} fairness using exponential mechanism unless the privacy guarantee is trivial $(\epsilon = 0)$.
\begin{theorem}\label{theo:fair2}
\revv{Let $f^a(\rho) \coloneqq \Pr\{R=\rho|A=a,Y=1 \}$. 
If $f^0(\rho) - f^1(\rho) > f^0(\rho') - f^1(\rho')$ and ${f}_R(\rho) < {f}_R(\rho')$  for all $\rho > \rho'$, \newrev{and $f^0(\rho) - f^1(\rho)\geq 0$  for $\rho = \rho_2,\ldots, \rho_{n'}$}, then we have, $\newline$
	1. $\gamma(\epsilon)> 0$ for $\epsilon>0$,} i.e., $\mathscr{A}_\epsilon(\cdot)$ is always biased in favor of individuals with protected attribute $A=0$. $\newline$
	2. $\gamma(\epsilon)<  \gamma_{\infty}, ~ \forall \epsilon\geq 0$, i.e., $\mathscr{A}_\epsilon(\cdot)$ is always fairer than $\mathscr{A}(\cdot)$. 
\end{theorem}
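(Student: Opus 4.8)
The plan is to reduce both conclusions to a single scalar identity for $\gamma(\epsilon)$ and then exploit the monotonicity hypotheses through summation by parts. Conditioning on individual $i$'s own score and using that the other $n-1$ scores are i.i.d.\ from $f_R$ independently of $(A_i,Y_i)$, I would write $\gamma(\epsilon)=\sum_{\rho\in\mathcal{R}} d(\rho)\,\phi_\epsilon(\rho)$, where $d(\rho):=f^0(\rho)-f^1(\rho)$ and $\phi_\epsilon(\rho):=E\{Z_{i,\epsilon}\mid R_i=\rho\}$ is the expected exponential-mechanism weight of a candidate whose own score is $\rho$. Because $f^0$ and $f^1$ are both PMFs, $\sum_\rho d(\rho)=0$. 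Ordering $\rho_1<\cdots<\rho_{n'}$ and setting $D_k:=\sum_{l\le k} d(\rho_l)$ (so $D_{n'}=0$), summation by parts gives $\gamma(\epsilon)=-\sum_{k=1}^{n'-1} D_k\,[\phi_\epsilon(\rho_{k+1})-\phi_\epsilon(\rho_k)]$. The hypotheses that $d$ is strictly increasing and nonnegative on $\rho_2,\dots,\rho_{n'}$ force $d(\rho_1)<0$ (otherwise $\sum_\rho d(\rho)=0$ would make every $d(\rho_k)$ vanish, contradicting strict monotonicity); since the remaining increments $d(\rho_l)$ are nonnegative, the partial sums satisfy $D_1<0$ and $D_k\le 0$ for every $k$. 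This $D_k\le 0$, $D_1<0$ sign pattern is the common engine for both parts.

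For Part 1, I would show $\phi_\epsilon$ is strictly increasing in $\rho$ for $\epsilon>0$: for each realization of the competitors' aggregate weight $S=\sum_{j}\exp\{\epsilon R_j/2\}>0$, the map $x\mapsto x/(x+S)$ is strictly increasing and $\exp\{\epsilon\rho/2\}$ is strictly increasing in $\rho$, so each increment $\phi_\epsilon(\rho_{k+1})-\phi_\epsilon(\rho_k)$ is strictly positive. Substituting into the summation-by-parts formula, every term $-D_k[\phi_\epsilon(\rho_{k+1})-\phi_\epsilon(\rho_k)]$ is nonnegative and the $k=1$ term is strictly positive (since $D_1<0$), so $\gamma(\epsilon)>0$. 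Notably, this argument uses only the two hypotheses on $d$, not the assumption that $f_R$ is decreasing.

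For Part 2, I would invoke Lemma~\ref{lem:convegance} to write $\gamma_\infty=\sum_\rho d(\rho)\,\phi_\infty(\rho)$ with $\phi_\infty(\rho)=\lim_{\epsilon\to\infty}\phi_\epsilon(\rho)=E\{Z_i\mid R_i=\rho\}$, so that the same summation by parts yields $\gamma_\infty-\gamma(\epsilon)=-\sum_{k=1}^{n'-1} D_k\,[\psi(\rho_{k+1})-\psi(\rho_k)]$ with $\psi:=\phi_\infty-\phi_\epsilon$. Given $D_k\le 0$ and $D_1<0$, it then suffices to prove the key lemma that $\psi$ is nondecreasing in $\rho$ with at least one strict increment. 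This is exactly where the hypothesis that $f_R$ is strictly decreasing enters: $\psi(\rho)$ measures how much the hard argmax rule rewards score $\rho$ beyond what the soft exponential mechanism does, and a decreasing $f_R$ concentrates each candidate's competition strictly below them, so this under-reward grows with $\rho$. I expect the monotonicity of $\psi$ to be the main obstacle: unlike $\phi_\epsilon$ itself, $\psi$ is a difference of two increasing functions, so its monotonicity is not automatic and must be extracted from the shape constraint on $f_R$ (equivalently, one may instead prove that $\gamma(\epsilon)$ is increasing in $\epsilon$ by showing $\partial_\epsilon\phi_\epsilon(\rho)$ is increasing in $\rho$ when $f_R$ is decreasing, then combine with $D_k\le 0$). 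Once the monotonicity of $\psi$ is in hand, both conclusions of the theorem follow immediately from the sign pattern of the $D_k$'s.
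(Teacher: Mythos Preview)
Your Part 1 is correct and genuinely different from the paper's route. Where the paper proves both parts at once by showing $\gamma'(\epsilon)>0$, you condition on $R_i=\rho$, write $\gamma(\epsilon)=\sum_\rho d(\rho)\,\phi_\epsilon(\rho)$, and apply Abel summation. The sign pattern $D_1<0$, $D_k\le 0$ is derived correctly from the hypotheses on $d$, and the strict monotonicity of $\phi_\epsilon$ is immediate from $x\mapsto x/(x+S)$. A nice by-product, which the paper's argument does not reveal, is that the decreasing-$f_R$ hypothesis is in fact unnecessary for the conclusion $\gamma(\epsilon)>0$.

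Your Part 2, however, is only a plan: you reduce to the monotonicity of $\psi=\phi_\infty-\phi_\epsilon$ (or, via your alternative, of $\partial_\epsilon\phi_\epsilon$ in $\rho$), flag it as the main obstacle, and stop. This is the genuine gap. The paper does not try to prove either of these pointwise monotonicity statements; instead it works directly with the full derivative
\[
\gamma'(\epsilon)=\sum_{(r_1,\dots,r_n)\in\mathcal{R}^n}\ \sum_{k=1}^{n}\frac{(r_i-r_k)\,e^{\epsilon(r_i+r_k)/2}}{\big(\sum_j e^{\epsilon r_j/2}\big)^2}\Big(\prod_{j\neq i} f_R(r_j)\Big)\,d(r_i),
\]
and pairs the term at the tuple with $(r_i,r_k)=(t,l)$ against the one obtained by swapping positions $i$ and $k$. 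The denominator and $\prod_{j\neq i,k}f_R(r_j)$ are invariant under the swap, so each pair contributes a positive multiple of
\[
(t-l)\,\big[f_R(l)\,d(t)-f_R(t)\,d(l)\big],
\]
which is strictly positive for $t>l$ precisely because $d$ is increasing, $d(\rho)\ge 0$ for $\rho\ge\rho_2$, and $f_R$ is strictly decreasing. That single pairwise inequality is where all three hypotheses combine, and it yields $\gamma'(\epsilon)>0$, hence $\gamma(\epsilon)<\gamma_\infty$. Notice that this symmetrization exploits exchangeability between $R_i$ and $R_k$, which is exactly what you lose once you condition on $R_i=\rho$; that is why your Abel-summation reduction makes the remaining lemma harder than the paper's direct argument. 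If you want to keep your framework, the cleanest fix is to abandon the $\psi$-monotonicity route and instead prove $\gamma'(\epsilon)>0$ by the same pairing, then combine it with your Part~1.
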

 The first condition implies that among applicants who are qualified, individuals with $A=0$ are more likely to have higher qualification scores as compared to individuals with protected attribute $A=1$. The second condition implies that most of the applicants have small qualification scores.   
Under these conditions, an exponential mechanism with $\epsilon>0$ can never achieve perfect fairness. Moreover, it shows that \xr{the exponential mechanism can improve fairness compared to the non-private algorithm $\mathscr{A}(\cdot)$ selecting an individual with the highest score. }

Theorems \ref{theo:fair} and \ref{theo:fair2} together show that the exponential mechanism may or may not achieve  \textit{perfect} fairness. Nevertheless, we can show that always there exists privacy parameter $\bar{\epsilon}$ such that $\mathscr{A}_{\bar{\epsilon}}(\cdot)$ is fairer than non-private $\mathscr{A}(\cdot)$.

\begin{theorem}\label{theo:nInfinity}
	If $\mathscr{A}(\cdot)$ is not 0-fair, 
	then there exists $\hat{\epsilon}\in (0,+\infty) $ such that $|\gamma({\epsilon})| < |\gamma_\infty|, \forall \epsilon \in (0,\hat{\epsilon})$. 
\end{theorem}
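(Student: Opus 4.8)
The plan is to exploit the continuity of $\gamma(\epsilon)$ on $[0,+\infty)$ together with its two boundary behaviours: $\gamma(0)=0$ at one end and $\lim_{\epsilon\to+\infty}\gamma(\epsilon)=\gamma_\infty\neq 0$ at the other. First I would pin down the value at $\epsilon=0$. When $\epsilon=0$ the random variable $Z_{i,\epsilon}=\frac{\exp\{\epsilon R_i/2\}}{\sum_{j}\exp\{\epsilon R_j/2\}}$ collapses to the constant $\tfrac{1}{n}$ (the exponential mechanism reduces to uniform random selection), so both conditional expectations in the definition of $\gamma$ equal $\tfrac{1}{n}$ and hence $\gamma(0)=\tfrac1n-\tfrac1n=0$. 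Next, since $\mathscr{A}(\cdot)$ is assumed \emph{not} $0$-fair and is $\gamma_\infty$-fair by Lemma \ref{lem:convegance}, we have $|\gamma_\infty|>0$.

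The second step is to establish that $\gamma(\epsilon)$ is continuous at $\epsilon=0$ (continuity on all of $[0,+\infty)$ comes for free). Writing $E\{Z_{i,\epsilon}\mid A_i=a,Y_i=1\}$ as a finite sum over realizations $(r_1,\dots,r_n)\in\mathcal{R}^n$ — the score set $\mathcal{R}$ being finite — of the ratio $\frac{\exp\{\epsilon r_i/2\}}{\sum_{j}\exp\{\epsilon r_j/2\}}$ weighted by the appropriate conditional probabilities, each summand is a ratio of finite sums of exponentials of $\epsilon$ with a strictly positive denominator, hence a smooth (indeed analytic) function of $\epsilon$. A finite sum of such functions is continuous, so both conditional expectations, and therefore their difference $\gamma(\epsilon)$, are continuous.

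Finally I would invoke the $\varepsilon$--$\delta$ characterization of continuity at $\epsilon=0$: choosing the tolerance $\delta=|\gamma_\infty|>0$, continuity yields some $\hat\epsilon>0$ with $|\gamma(\epsilon)-\gamma(0)|=|\gamma(\epsilon)|<|\gamma_\infty|$ for all $\epsilon\in[0,\hat\epsilon)$; replacing $\hat\epsilon$ by $\min\{\hat\epsilon,1\}$ if necessary gives the claimed $\hat\epsilon\in(0,+\infty)$ with $|\gamma(\epsilon)|<|\gamma_\infty|$ for all $\epsilon\in(0,\hat\epsilon)$, which is exactly the statement.

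I do not anticipate a serious obstacle. The only point requiring care is the continuity argument, and specifically checking that conditioning on $(A_i,Y_i)$ does not interfere with it: since the scores $\{R_j\}_{j\neq i}$ are independent of $(A_i,Y_i)$ and $R_i$ given $(A_i=a,Y_i=1)$ has a fixed conditional PMF, the conditional expectation remains a finite, $\epsilon$-indexed convex combination of the smooth ratios above, so continuity (and in fact smoothness) of $\gamma$ on $[0,+\infty)$ follows directly.
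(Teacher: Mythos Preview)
Your proposal is correct and follows essentially the same approach as the paper: both arguments rest on the continuity of $\gamma(\epsilon)$, the fact that $\gamma(0)=0$, and the hypothesis $|\gamma_\infty|>0$. Your execution via the $\varepsilon$--$\delta$ characterization of continuity at $\epsilon=0$ is in fact cleaner and more direct than the paper's version, which invokes an intermediate-value-type statement without spelling out how it yields a full neighborhood $(0,\hat\epsilon)$ on which $|\gamma(\epsilon)|<|\gamma_\infty|$.
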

 This section has studied the possibility of using the exponential mechanism to improve both fairness and privacy. Note that even when perfect fairness is attainable, the outcome may not be desirable to the decision-maker if it is not accurate enough. In the next section, we shall take accuracy into account and examine its relation with privacy and fairness. 

\subsection{Accuracy-Privacy Trade-off}
Let $\theta(\epsilon) = \Pr\{ Y_{\mathscr{A}_{\epsilon}(\mathbf{D}) }= 1\}$ be the accuracy of $\mathscr{A}_{\epsilon}(\cdot)$. 
\begin{eqnarray}
\theta(\epsilon) &=& \sum_{i=1}^n \Pr\{ Y_i = 1 , I_{i,\epsilon} =1\} \nonumber\\
&=& \sum_{i=1}^n \Pr\{ I_{i,\epsilon} =1|Y_i = 1 \} \cdot \Pr\{Y_i = 1 \}\nonumber\\
&=& \Pr\{Y = 1\} \sum_{i=1}^n E\{Z_{i,\epsilon}  | Y_i=1\}. \nonumber
\end{eqnarray}
Therefore, accuracy maximization  is equivalent  to maximizing $E\{Z_{i,\epsilon} | Y_i=1\}$. Since $Z_{i,\epsilon}$ converges surely to $ Z_i$, similar to Lemma \ref{lem:convegance}, we can show that $\lim_{\epsilon \rightarrow +\infty} \theta(\epsilon)$ exists and is equal to the accuracy of non-private algorithm $\mathscr{A}(\cdot)$.   
We further make the following assumption that has been widely used in the literature \cite{jung2020fair, FairCake}.
	\begin{assumption}\label{assump1}
		$\frac{\Pr\{ R = \rho| Y = 1\} }{\Pr\{ R = \rho |Y = 0\} } \geq \frac{\Pr\{ R= \rho'| Y = 1\} }{\Pr\{ R = \rho' |Y = 0\} }  $, $\forall \rho > \rho'$.
	\end{assumption}
	Assumption \ref{assump1}, also known as the monotone likelihood ratio property of two PMFs $\Pr\{ R = \rho| Y = 1\}$ and $\Pr\{ R = \rho| Y = 0\}$, is relatively mild and can be satisfied by various probability distributions including Binomial and Poisson distributions. \rev{It implies that a qualified individual is more likely to have a high qualification score.}  The next theorem characterizes the effect of privacy parameter $\epsilon$ on the accuracy of $\mathscr{A}_{\epsilon}(\cdot)$.  
	\begin{theorem}\label{theo:acc}
		Under Assumption \ref{assump1}, $\theta(\epsilon)$ is increasing in $\epsilon$. 
	\end{theorem}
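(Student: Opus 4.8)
The plan is to reduce the monotonicity of $\theta(\epsilon)$ to the nonnegativity of a covariance and then invoke a comonotonicity (Chebyshev sum) inequality. First I would rewrite the accuracy in a form that exposes the role of the scores. Conditioning on the realized scores $(R_1,\dots,R_n)$, the selection performed by $\mathscr{A}_{\epsilon}(\cdot)$ is independent of the hidden labels, so that $\Pr\{Y_i=1\mid \mathscr{A}_\epsilon(\mathbf{D})=i,\,R_i=\rho\}=\Pr\{Y=1\mid R=\rho\}=:g(\rho)$. Combining this with the identity $\Pr\{I_{i,\epsilon}=1\}=E\{Z_{i,\epsilon}\}$ and the expression $\theta(\epsilon)=\Pr\{Y=1\}\sum_i E\{Z_{i,\epsilon}\mid Y_i=1\}$ derived above, the accuracy can be rewritten as
\[
\theta(\epsilon)=E\!\left[\sum_{i=1}^n g(R_i)\,Z_{i,\epsilon}\right]=E\!\left[\frac{\sum_{i=1}^n g(R_i)\,e^{\epsilon R_i/2}}{\sum_{j=1}^n e^{\epsilon R_j/2}}\right],
\]
where the outer expectation is over the i.i.d. scores. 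Here $g$ is non-decreasing: by Bayes' rule the posterior odds $g(\rho)/(1-g(\rho))$ equal $\frac{\Pr\{Y=1\}}{\Pr\{Y=0\}}\cdot\frac{\Pr\{R=\rho\mid Y=1\}}{\Pr\{R=\rho\mid Y=0\}}$, which is non-decreasing in $\rho$ precisely by Assumption \ref{assump1} (the monotone likelihood ratio property), and $x\mapsto x/(1-x)$ is increasing.

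Second, I would differentiate the inner quantity for a fixed score vector. Since $\mathcal{R}$ is finite, everything is a finite sum and differentiation under the expectation is immediate. Writing $p_i(\epsilon)=e^{\epsilon R_i/2}/\sum_j e^{\epsilon R_j/2}=Z_{i,\epsilon}$, a direct computation gives $\frac{d}{d\epsilon}p_i=\tfrac12 p_i\,(R_i-\bar R)$ with $\bar R=\sum_j R_j p_j$, so that the inner weighted average $h_\epsilon:=\sum_i g(R_i)p_i$ satisfies
\[
\frac{d h_\epsilon}{d\epsilon}=\tfrac12\Big(\sum_i g(R_i)R_i\,p_i-\big(\sum_i g(R_i)p_i\big)\big(\sum_j R_j p_j\big)\Big)=\tfrac12\,\mathrm{Cov}_p\big(g(R),R\big),
\]
the covariance being taken under the (data-dependent) distribution $p$ that puts mass $p_i$ on the value $R_i$.

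Finally, because both $g(\cdot)$ and the identity map are non-decreasing, $g(R)$ and $R$ are comonotone functions of the same random variable, so $\mathrm{Cov}_p\big(g(R),R\big)=\tfrac12\sum_{i,j}p_i p_j\big(g(R_i)-g(R_j)\big)(R_i-R_j)\ge 0$ for every realization of the scores, since each summand is a product of two factors with the same sign. Taking expectations preserves the inequality, giving $\theta'(\epsilon)=E\big[\tfrac12\,\mathrm{Cov}_p(g(R),R)\big]\ge 0$, i.e.\ $\theta$ is increasing in $\epsilon$. I expect the only genuinely substantive step to be the reformulation in terms of the monotone posterior $g$ together with the covariance identity; once those are in place, Assumption \ref{assump1} and comonotonicity finish the argument, and no delicate interchange of limits or derivatives is needed because the score set is finite.
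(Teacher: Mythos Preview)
Your proof is correct. Both your argument and the paper's hinge on the same two facts --- that Assumption~\ref{assump1} is equivalent (via Bayes) to the posterior $g(\rho)=\Pr\{Y=1\mid R=\rho\}$ being non-decreasing, and that a swap-pairing of terms makes the derivative nonnegative --- but the packaging differs. The paper works with $l(\epsilon)=E\{Z_{i,\epsilon}\mid Y_i=1\}$, expands $l'(\epsilon)$ as an explicit sum over $(r_1,\dots,r_n)\in\mathcal{R}^n$, and pairs realizations obtained by swapping $r_i$ and $r_k$; this reduces to the inequality $f_R(\rho')f_{R\mid Y=1}(\rho)\ge f_R(\rho)f_{R\mid Y=1}(\rho')$ for $\rho>\rho'$, which the paper then checks is equivalent to MLR. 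You instead condition on the score vector, rewrite $\theta(\epsilon)=E\big[\sum_i g(R_i)Z_{i,\epsilon}\big]$, and recognize the $\epsilon$-derivative of the inner softmax average as $\tfrac12\,\mathrm{Cov}_p(g(R),R)$, nonnegative by comonotonicity. The two pairings are literally the same double sum (your $\tfrac12\sum_{i,j}p_ip_j(g(R_i)-g(R_j))(R_i-R_j)$ is the paper's swap written inside a single realization), and monotonicity of $g$ is the paper's ratio inequality divided by $f_R(\rho)f_R(\rho')$. Your route is a bit more transparent and portable: the covariance identity immediately carries over to the $m>1$ case and to other monotone score functions without redoing the explicit pairing.
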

Suppose that the task is to make a selection such that unfairness and privacy leakage are less than or equal to $\gamma_{\max}$ and $\epsilon_{\max}$, respectively. Then, exponential mechanism  $\mathscr{A}_{\epsilon^*}(\cdot)$ has the highest accuracy, where $\epsilon^*$ is the solution to  \eqref{eq:opt}.
\begin{eqnarray}\label{eq:opt}
\epsilon^* = \arg\max_{\epsilon\leq \epsilon_{\max}} \theta(\epsilon), ~~ s.t. ~~|\gamma(\epsilon)|\leq\gamma_{\max}. 
\end{eqnarray}
Based on Theorem \ref{theo:acc}, 
\xr{we have} the following corollary. 

\begin{corollary}\label{corollary}
	Under Assumption \ref{assump1}, 
\xr{	$$\epsilon^* =\begin{cases}
	\epsilon_{\max},~~ \text{ if }~~|\gamma(\epsilon_{\max})| \leq \gamma_{\max}\\
	\max \{\epsilon\leq \epsilon_{\max} | \gamma_{\max} =|\gamma(\epsilon)| \}, ~~~o.w.
	\end{cases}
	 $$}
	
\end{corollary}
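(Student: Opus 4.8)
The plan is to leverage the strict monotonicity of $\theta(\cdot)$ from Theorem~\ref{theo:acc} to collapse the constrained optimization in~\eqref{eq:opt} into the purely geometric task of locating the largest feasible privacy parameter. Since $\theta(\epsilon)$ is increasing in $\epsilon$, the maximizer of $\theta$ over the feasible set $F = \{\epsilon \in [0,\epsilon_{\max}] : |\gamma(\epsilon)| \leq \gamma_{\max}\}$ is exactly $\max F$, provided that maximum exists. Thus the whole corollary reduces to computing $\max F$ and matching it against the two displayed cases.

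First I would record two structural facts about $\gamma(\cdot)$. From $Z_{i,\epsilon} = \exp\{\epsilon R_i/2\}/\sum_{j} \exp\{\epsilon R_j/2\}$ and the fact that $\gamma(\epsilon)$ is a finite weighted sum over the finite set $\mathcal{R}^n$ of conditional expectations of such ratios, $\gamma$ is a $C^\infty$ (in particular continuous) function of $\epsilon$ on $[0,\infty)$, and hence so is $|\gamma|$. Second, at $\epsilon = 0$ every selection probability equals $1/n$, so $Z_{i,0}\equiv 1/n$ and therefore $\gamma(0)=0$, giving $|\gamma(0)| = 0 \leq \gamma_{\max}$. In particular $0\in F$, so $F$ is nonempty; and being the intersection of the compact interval $[0,\epsilon_{\max}]$ with the closed preimage $|\gamma|^{-1}([0,\gamma_{\max}])$, the set $F$ is compact, so $\max F$ is well defined.

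The first case is immediate: if $|\gamma(\epsilon_{\max})|\leq\gamma_{\max}$ then $\epsilon_{\max}\in F$, whence $\max F = \epsilon_{\max}$ and $\epsilon^* = \epsilon_{\max}$. For the second case, where $|\gamma(\epsilon_{\max})| > \gamma_{\max}$, I would show that $\max F$ equals $\epsilon^\dagger \coloneqq \max\{\epsilon\leq\epsilon_{\max} : |\gamma(\epsilon)| = \gamma_{\max}\}$. Assuming $\gamma_{\max}>0$ (the boundary case $\gamma_{\max}=0$ makes $F$ coincide with the level set $\{|\gamma|=0\}$, so $\max F = \epsilon^\dagger$ trivially), we have $|\gamma(0)| = 0 < \gamma_{\max}$ and $|\gamma(\epsilon_{\max})| > \gamma_{\max}$, so the intermediate value theorem guarantees the level set $\{\epsilon : |\gamma(\epsilon)| = \gamma_{\max}\}$ is nonempty; by continuity it is closed, so $\epsilon^\dagger$ exists, and clearly $\epsilon^\dagger\in F$, giving $\epsilon^\dagger \leq \max F$. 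The reverse inequality is the crux: I claim no point of $(\epsilon^\dagger,\epsilon_{\max}]$ is feasible. If some $\epsilon'\in(\epsilon^\dagger,\epsilon_{\max})$ had $|\gamma(\epsilon')| < \gamma_{\max}$, then on $[\epsilon',\epsilon_{\max}]$ the continuous map $|\gamma|$ would pass from a value below $\gamma_{\max}$ to $|\gamma(\epsilon_{\max})| > \gamma_{\max}$, and the intermediate value theorem would yield a crossing point strictly above $\epsilon^\dagger$, contradicting the maximality of $\epsilon^\dagger$. Hence every $\epsilon\in(\epsilon^\dagger,\epsilon_{\max}]$ satisfies $|\gamma(\epsilon)| > \gamma_{\max}$ and is infeasible, so $\max F \leq \epsilon^\dagger$, and therefore $\max F = \epsilon^\dagger = \epsilon^*$.

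The main obstacle is that $\gamma(\epsilon)$ need not be monotone in $\epsilon$, so $F$ may be a disconnected union of intervals and one cannot simply declare the ``last crossing'' feasible by inspection. The decisive step is the intermediate value argument in the second case, which controls the behavior of $|\gamma|$ on the top interval $(\epsilon^\dagger,\epsilon_{\max}]$ and rules out a feasible pocket sitting above $\epsilon^\dagger$; continuity of $\gamma$, obtained from its representation as a finite sum of smooth exponential ratios, is precisely what makes this work.
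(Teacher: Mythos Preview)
Your argument is correct and follows the same approach as the paper: use the monotonicity of $\theta(\cdot)$ from Theorem~\ref{theo:acc} to reduce the problem to locating the largest feasible $\epsilon$, then invoke continuity of $\gamma$, the fact that $\gamma(0)=0$, and the intermediate value theorem. Your write-up is in fact more careful than the paper's own proof, which asserts but does not justify the final step that $\max F$ coincides with the largest point of the level set $\{|\gamma|=\gamma_{\max}\}$; your IVT argument on $(\epsilon^\dagger,\epsilon_{\max}]$ closes that gap cleanly.
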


We conclude this section by comparing our results with \cite{cummings2019compatibility}. Consider a constant algorithm that always selects the first individual. The accuracy of this algorithm is given by $\Pr(Y_1 = 1)$, which is equal to the accuracy of algorithm $\mathscr{A}_0(\cdot)$, i.e., $\Pr(Y=1) = \theta(0)$. Moreover, the constant algorithm is perfectly fair. 
 If there exists \xr{$\epsilon_o >0$} such that $\mathscr{A}_{\epsilon_o}(\cdot)$ is perfectly fair, then \xr{according to Theorem \ref{theo:acc}}, the accuracy of $\mathscr{A}_{\epsilon_o}(.)$ would be larger than that of \xr{$\mathscr{A}_0(\cdot)$}, \xr{implying that the perfect fair $\mathscr{A}_{\epsilon_o}(\cdot)$ is also more accurate than}
 the constant algorithm. \xr{In contrast}, Cummings \textit{et al.} in \cite{cummings2019compatibility} conclude that perfect (exact) fairness is not compatible with differential privacy. \xr{Specifically, they show that} any differentially private classifier which is perfectly fair would have lower accuracy than 
 a constant classifier. \xr{The reason that our conclusion differs from \cite{cummings2019compatibility} is as follows. \cite{cummings2019compatibility} studies a classification problem where there is a hypothesis class $\mathcal{H}$ (i.e., a set of possible classifiers) and it aims to select a  \textit{perfect} fair classifier randomly  from $\mathcal{H}$ with high accuracy using a differentially private algorithm. In particular, they show that if $h$ is perfectly fair and more accurate than a constant classifier under database $D$, then there  exists another database $D'$ with $D'\sim D$ such that $h$ violates perfect fairness under $D'$. Because $h$ is selected with zero probability under $D'$, differential privacy is violated, which implies their negative results.   In contrast, we focus on a selection problem with  a fixed number of approvals using \textit{fixed} supervised learning model $r(\cdot)$. In this case, privacy and perfect fairness can be compatible with each other.  }

\section{Choosing more than one applicant}\label{sec:choosingM}
Our results so far are concluded under the assumption that only one applicant is selected. In this section, we extend our results to a scenario where $m>1$ applicants are selected.

To preserve individual privacy, an exponential mechanism is adopted to select $m$ individuals  from $n$ applicants based on their qualification scores. Let $\mathcal{S} = \{ \mathcal{G} |\mathcal{G} \subseteq \mathcal{N},~|\mathcal{G}| = m\}$ be the set of all possible selections, and we index the elements in $\mathcal{S}$ by $\mathcal{G}_1, \mathcal{G}_2, \ldots, \mathcal{G}_{{n \choose m}}$. Let $\mathscr{B}_{\epsilon}(\cdot)$ be the exponential mechanism that selects $m$ individuals from $\mathcal{N}$ and satisfies $\epsilon$-differential privacy. One choice of  score function $v: \mathcal{S}\times \mathcal{D}\rightarrow [0,1]$ is $v(\mathcal{G}_i,\mathbf{D}) =  \frac{1}{m}\sum_{j\in \mathcal{G}_i} R_j$, representing the averaged qualification of selected individuals in $\mathcal{G}_i$.\footnote{The generalization of our results to other types of score function will be discussed in the appendix.} The sensitivity of $v(\cdot,\cdot)$ is \revv{$\max_{\mathcal{G}\in \mathcal{S},D\sim D'} |v(\mathcal{G},D) - v(\mathcal{G},D')|  = \frac{1}{m}$.} That is, under algorithm $\mathscr{B}_{\epsilon}(\cdot)$, $\mathcal{G}_i$ is selected according to probability
\begin{equation}
	\Pr\{ \mathscr{B}_{\epsilon}(D) = \mathcal{G}_i\} = \frac{\exp\{\epsilon \cdot  \frac{\sum_{j\in \mathcal{G}_i} r_j}{2}\}}{\sum_{\mathcal{G}\in \mathcal{S}} \exp\{\epsilon \cdot \frac{\sum_{j\in \mathcal{G}} r_j}{2}\}}.
\end{equation}
Further define $\mathcal{S}_i =\{\mathcal{G} | \mathcal{G}\in \mathcal{S}, i\in \mathcal{G} \}$ as the set of all selections that contain  individual $i$. Define random variable 
\begin{equation}
W_{i,\epsilon}  = \sum_{\mathcal{G} \in \mathcal{S}_i} \frac{\exp\{\epsilon \cdot  \frac{\sum_{j\in \mathcal{G}} R_j}{2}\}}{\sum_{\mathcal{G}'\in \mathcal{S}} \exp\{\epsilon \cdot \frac{\sum_{j\in \mathcal{G}'} R_j}{2}\}}
\end{equation}
and Bernoulli random variable $J_{i,\epsilon}$ indicating whether $i$ is selected $(J_{i,\epsilon}=1)$ under $\mathscr{B}_{\epsilon}(\cdot)$ or not $(J_{i,\epsilon}=0)$. We have $\Pr\{J_{i,\epsilon} =1 \} = E\{ W_{i,\epsilon}\}$. Similar to Section \ref{sec:Model}, we further introduce algorithm $\mathscr{B}(\cdot)$ which selects a set of $m$ individuals with the highest average  qualification score. If there are more than one set with the  highest average qualification score, $\mathscr{B}(\cdot)$ selects one set among them uniformly at random. Let $\mathcal{S}_{\max} = \{ \mathcal{G}'\in\mathcal{S}| \sum_{j \in \mathcal{G}'}R_j = \max_{\mathcal{G}\in\mathcal{S} } \sum_{i \in \mathcal{G}}R_i   \}$. Each element in $\mathcal{S}_{\max}$ is a set of $m$ individuals who have the highest qualification scores in total. Define random variable
\begin{eqnarray}
W_i = \left\lbrace \begin{array}{ll}
0 & \mbox{ if } \mathcal{S}_i \cap \mathcal{S}_{\max} = \emptyset \\ 
\frac{1}{|\mathcal{S}_{\max} |} & o.w.
\end{array}\right. \nonumber
\end{eqnarray} 
and Bernoulli random variable $J_{i}$ indicating whether $i$ is selected $(J_{i}=1)$ under $\mathscr{B}(D)$ or not $(J_{i}=0)$. We have $\Pr\{J_{i} =1 \} = E\{ W_{i}\}$. Similar to the fairness metric  defined in Definition \ref{def:fairness}, we say algorithm $\mathscr{B}_{\epsilon}(\cdot)$ is $\gamma$-fair if the following holds,
\begin{equation}
E\{W_{i,\epsilon} | A_i=0, Y_i=1\} - E\{W_{i,\epsilon} | A_i=1, Y_i=1\} = \gamma ~.\nonumber 
\end{equation}
Re-write $\gamma$ above as $\gamma(\epsilon)$, a function of $\epsilon$. Similar to Lemma \ref{lem:convegance}, we can show that $W_{i,\epsilon}$ converges surely to $W_i$ as $\epsilon \to +\infty$, and that $\lim_{\epsilon \to +\infty}\gamma(\epsilon)$ exists. Moreover, $\mathscr{B}(D)$ is $\gamma_{\infty}$-fair with $\gamma_{\infty} = \lim_{\epsilon \to +\infty}\gamma(\epsilon)$. Next theorem identifies sufficient conditions under which exponential mechanism $\mathscr{B}_{\epsilon}(.)$ can be perfectly fair.
\begin{theorem}\label{theo:M}
	\revv{There exists} $\epsilon_o >0$ such that $\gamma(\epsilon_o) = 0$ under $\mathscr{B}_{\epsilon}(.)$ if both of the following \revv{constraints} are satisfied:
	
	(1) $E\{W_i| A_i = a,Y_i =1  \} <E\{W_i| A_i = \neg a,Y_i =1 \}$;
	
	(2)  $E\left\lbrace R_i | A_i = a,Y_i =1\right\rbrace > E\left\lbrace R_i | A_i = \neg a,Y_i =1\right\rbrace$.
\end{theorem}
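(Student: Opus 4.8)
The plan is to mirror the proof of Theorem~\ref{theo:fair}, replacing the single-slot selection variable $Z_{i,\epsilon}$ by the multi-slot variable $W_{i,\epsilon}$, and to exhibit $\epsilon_o$ by applying the intermediate value theorem to the continuous map $\epsilon \mapsto \gamma(\epsilon)$. First I would evaluate the two endpoints of the interval. At $\epsilon=0$ the mechanism $\mathscr{B}_0(\cdot)$ selects a set uniformly at random, so $W_{i,0} = |\mathcal{S}_i|/|\mathcal{S}| = \binom{n-1}{m-1}/\binom{n}{m} = m/n$ is a deterministic constant independent of the scores; hence both conditional expectations equal $m/n$ and $\gamma(0)=0$. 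At the other endpoint, the discussion preceding the theorem established that $W_{i,\epsilon}$ converges surely to $W_i$, that $\lim_{\epsilon\to+\infty}\gamma(\epsilon)=\gamma_\infty$ exists, and that $\mathscr{B}(\cdot)$ is $\gamma_\infty$-fair; constraint~(1) then fixes the sign of $\gamma_\infty$ (negative when $a=0$, positive when $a=1$).

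The crux is to pin down the sign of $\gamma(\epsilon)$ for small $\epsilon>0$, which I would read off from $\gamma'(0)$. Differentiating $W_{i,\epsilon}=\sum_{\mathcal{G}\in\mathcal{S}_i}\exp\{\tfrac{\epsilon}{2}\sum_{j\in\mathcal{G}}R_j\}\,/\,\sum_{\mathcal{G}'\in\mathcal{S}}\exp\{\tfrac{\epsilon}{2}\sum_{j\in\mathcal{G}'}R_j\}$ and evaluating at $\epsilon=0$ reduces to a counting argument: $R_i$ appears in all $\binom{n-1}{m-1}$ sets of $\mathcal{S}_i$, each $R_j$ with $j\neq i$ appears in $\binom{n-2}{m-2}$ of them, and each $R_j$ appears in $\binom{n-1}{m-1}$ sets of $\mathcal{S}$ overall. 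Collecting terms yields
\[
\left.\frac{dW_{i,\epsilon}}{d\epsilon}\right|_{\epsilon=0} = \frac{m(n-m)}{2n^2}\left(R_i - \frac{1}{n-1}\sum_{j\neq i}R_j\right).
\]
Taking conditional expectations and subtracting, the i.i.d.\ assumption makes $\tfrac{1}{n-1}\sum_{j\neq i}R_j$ contribute the same constant $E\{R\}$ to both conditional expectations, so it cancels, leaving
\[
\gamma'(0) = \frac{m(n-m)}{2n^2}\Big(E\{R_i\mid A_i=0,Y_i=1\} - E\{R_i\mid A_i=1,Y_i=1\}\Big).
\]
Constraint~(2) therefore fixes $\operatorname{sign}(\gamma'(0))$: positive when $a=0$, negative when $a=1$.

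Finally I would combine the signs. When $a=0$ we have $\gamma(0)=0$ and $\gamma'(0)>0$, so $\gamma(\epsilon)>0$ for small $\epsilon>0$, yet $\gamma_\infty<0$; when $a=1$ all three signs are reversed. Either way $\gamma$ changes sign on $(0,+\infty)$, and since $\epsilon\mapsto\gamma(\epsilon)$ is continuous (indeed smooth, being a ratio of finite sums of exponentials with a strictly positive denominator), the intermediate value theorem furnishes $\epsilon_o\in(0,+\infty)$ with $\gamma(\epsilon_o)=0$. The main obstacle is the derivative computation: the combinatorial bookkeeping of set-membership multiplicities for $W_{i,\epsilon}$ is heavier than in the single-slot case, and one must verify carefully that the cross terms involving $\sum_{j\neq i}R_j$ cancel under the i.i.d.\ assumption, so that only the conditional-mean gap of constraint~(2) survives in $\gamma'(0)$.
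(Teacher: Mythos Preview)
Your proposal is correct and follows essentially the same route as the paper: establish $\gamma(0)=0$, use constraint~(1) together with the sure convergence $W_{i,\epsilon}\to W_i$ to fix the sign of $\gamma_\infty$, compute $\gamma'(0)$ to fix the sign of $\gamma$ near $0$ via constraint~(2), and conclude by the intermediate value theorem. Your derivative constant $\tfrac{m(n-m)}{2n^2}$ differs from the paper's $\tfrac{\binom{n-1}{m-1}\binom{n-1}{m}}{2m\binom{n}{m}^2}=\tfrac{n-m}{2n^2}$ by a factor of $m$ (the paper's derivative formula tacitly uses $v(\mathcal{G},\mathbf{D})=\tfrac{1}{m}\sum_{j\in\mathcal{G}}R_j$ in the exponent rather than $\sum_{j\in\mathcal{G}}R_j$ as in the definition of $W_{i,\epsilon}$), but since both constants are positive this has no bearing on the argument.
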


To measure the accuracy in this scenario, we adjust Definition \ref{def:accuracy} accordingly. For each individual $i$, we define random variable $U_i = \begin{cases}
1, \text{ if } i\in \mathscr{B}_{\epsilon}(\mathbf{D})  \text{ \& }Y_i = 1\\
0,~~o.w.
	\end{cases}$ as the utility gained by the decision-maker from individual $i$, i.e., \revv{the decision-maker receives benefit $+1$ if accepting a qualified applicant and $0$ otherwise.} 
	Then the accuracy of $\mathscr{B}_{\epsilon}(\cdot)$ can be defined as the expected utility received by decision-maker, i.e., 
	\begin{eqnarray}\label{eq:acc}
\theta(\epsilon) = E\Big\{\frac{1}{m}\sum_{i\in \mathcal{N}} U_i\Big\}  
= \frac{1}{m}\sum_{i\in \mathcal{N}} \Pr\{J_{i,\epsilon} = 1,Y_i = 1 \}.
	\end{eqnarray}
Note that $(1/m)$ in \eqref{eq:acc} is a normalization factor to make sure $\theta(\epsilon) \in [0,1]$. Also, it is worth mentioning that $\theta(\epsilon)$ reduces to Definition \ref{def:accuracy} when $m=1$. Under Assumption \ref{assump1}, we can show that $\theta(\epsilon)$ is increasing in $\epsilon$, and an optimization problem similar to optimization \eqref{eq:opt} can be formulated given $\epsilon_{\max}$ and $\gamma_{\max}$ to find  appropriate $\epsilon^*$.  

\section{Numerical Experiments}\label{sec:Num}
\begin{figure*}[t]
	\centering
	\begin{minipage}{.32\textwidth}
		\centering
		\includegraphics[width=1\linewidth]{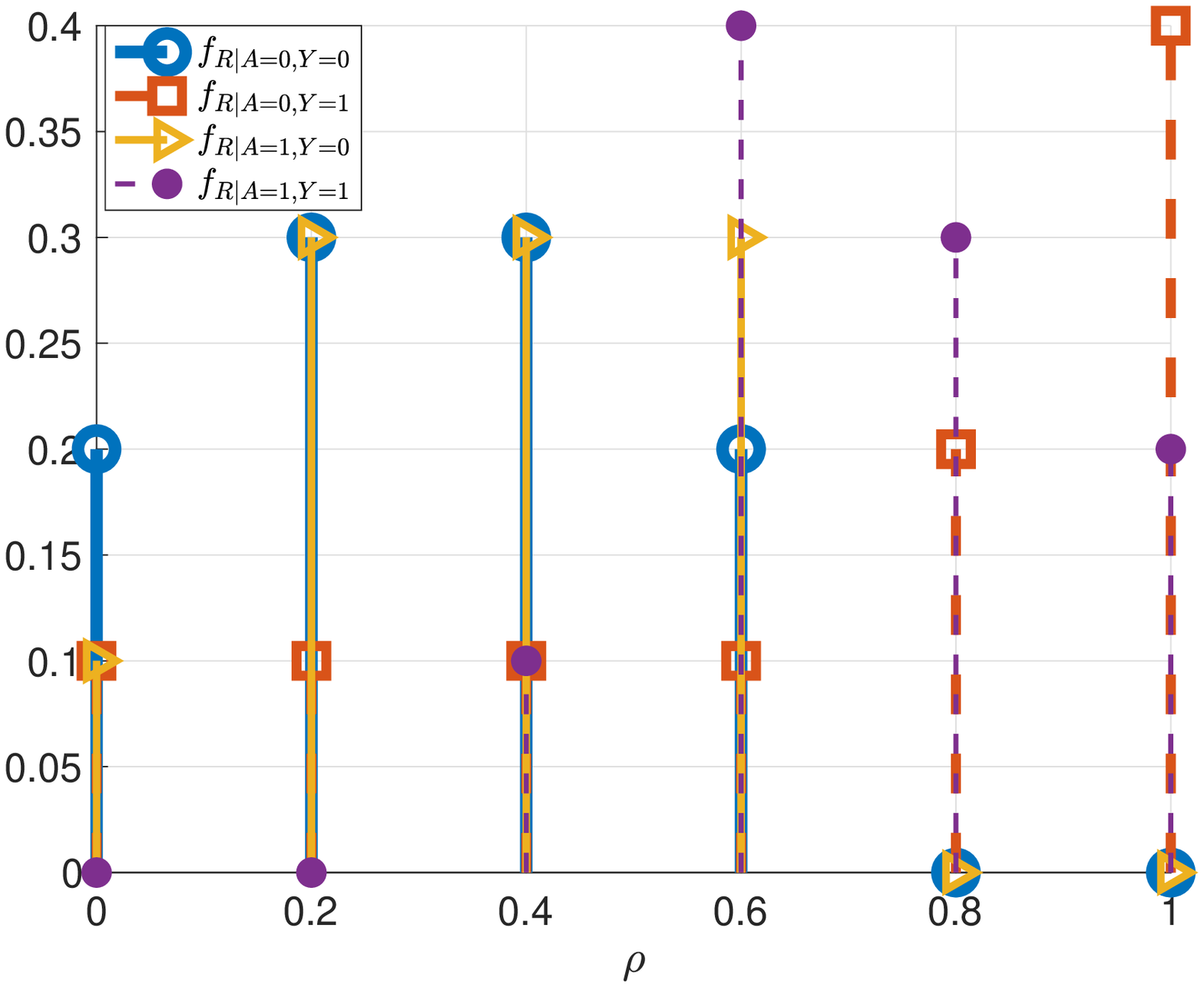}
		\caption{PMF of score $R$ conditional on $Y$ and $A$.}
		\label{fig:1}
	\end{minipage}%
	\hfill 
	\begin{minipage}{.31\textwidth}
		\centering
		\includegraphics[width=1\linewidth]{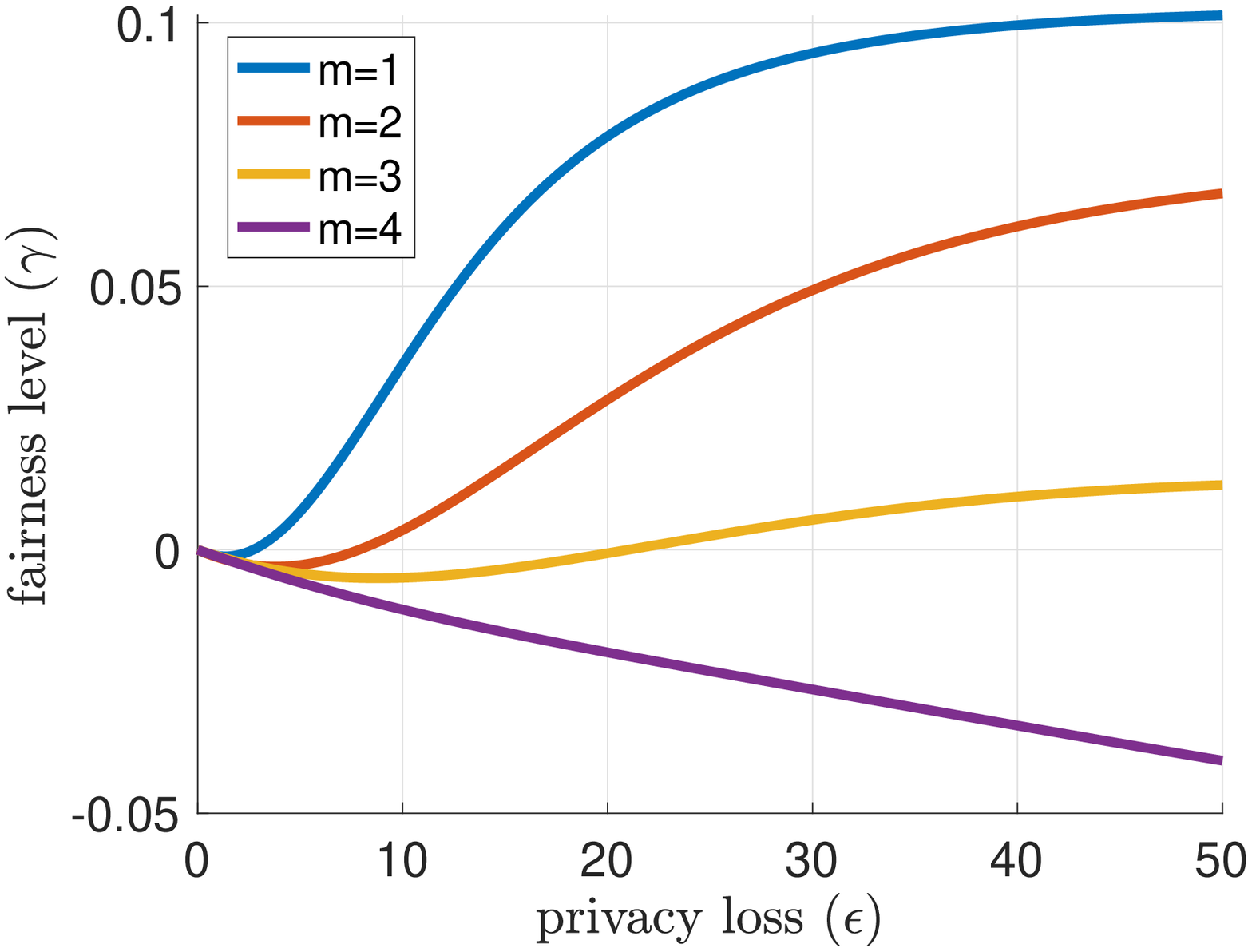}
		\caption{Fairness level attained using $\mathscr{A}_{\epsilon}(\cdot)$ and $\mathscr{B}_{\epsilon}(\cdot)$ as  functions of privacy level $\epsilon$. In this example, the perfect fairness is achievable if $m \in \{1,2,3\}$.}
		\label{fig:2}
	\end{minipage}
	\hfill 
	\begin{minipage}{.31\textwidth}
		\centering
		\includegraphics[width=1\linewidth]{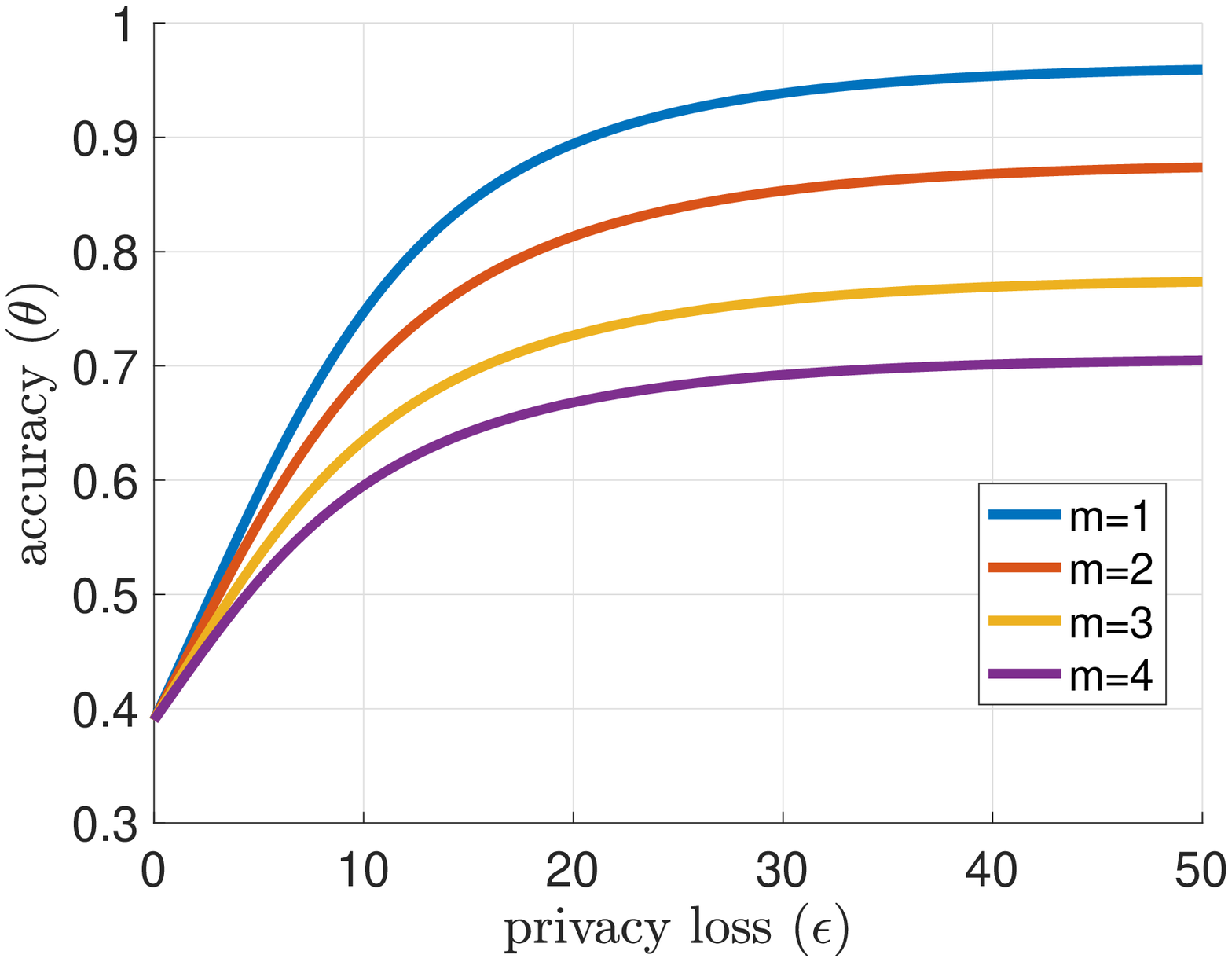}
		\caption{Accuracy level of the algorithms $\mathscr{A}_{\epsilon}(D)$ and $\mathscr{B}_{\epsilon}(D)$ as a function of privacy level $\epsilon$. 
		It shows that the accuracy is increasing in $\epsilon$. }
		\label{fig:3}
	\end{minipage}
\end{figure*}
\textbf{Case study 1: Synthetic data} 

To evaluate the fairness of algorithms $\mathscr{A}_\epsilon(\cdot)$ and  $\mathscr{B}_\epsilon(\cdot)$,
we consider a scenario where the qualification scores are generated randomly based on a distribution shown in Figure \ref{fig:1}. In this scenario, $n = 10$, $\Pr\{A=0\} = 0.1, \Pr\{Y=0|A=0\} = 0.7$, and $ \Pr\{Y=0|A=1\} = 0.6$. Figure \ref{fig:2} illustrates the fairness level \xr{$\gamma(\epsilon)$} of algorithms $\mathscr{A}_{\epsilon}(\cdot)$ and $\mathscr{B}_\epsilon(\cdot)$ as a function of $\epsilon$. In this case, both conditions in Theorem \ref{theo:fair} and Theorem \ref{theo:M} are satisfied when $m\in \{1,2,3\}$ (See the appendix for details). 
As a result, we can find privacy parameter $\epsilon_o$ at which the exponential mechanism is perfectly fair. 
Note that  conditions of Theorem \ref{theo:M} do not hold for $m=4$ (see the appendix) and  $\mathscr{B}_{\epsilon}(\cdot)$ is not perfectly fair in this case. This is confirmed in Figure 2. Figure \ref{fig:3} illustrates accuracy of 
 $\mathscr{A}_{\epsilon} (\cdot)$ and $\mathscr{B}_\epsilon (\cdot)$ as a function of privacy loss $\epsilon$. As expected, accuracy $\theta(\epsilon)$ is increasing in $\epsilon$. By comparing  Figure \ref{fig:2} and Figure \ref{fig:3},  we observe that even though improving privacy decreases accuracy, it can improve fairness.  Lastly, privacy and accuracy of the exponential mechanism under perfect fairness have been provided in Table \ref{Table}. 

\begin{table}[t!]
	\caption{Accuracy and privacy under perfect fairness. $\epsilon_o$ is a privacy parameter at which the exponential mechanism is perfectly fair. If $\epsilon_o = 0$ in this table, then the exponential mechanism with a non-zero privacy parameter cannot achieve perfect fairness. }
\centering
\begin{tabular}{ccccccc} 
\toprule
& &$\epsilon_o$ &  $\theta(\epsilon_o)$&$\theta(\infty)$& Acc. Reduction. \\
\midrule

&$m=1$ 	& 2.76& 0.50	& 0.96 &  47.92\% \\
&$m=2$ 	& 7.78 & 0.64 & 0.88 &  27.27\%  \\
&$m=3$ 	& 21.11 & 0.73 & 0.77 & 5.19\%   \\
\rot{\rlap{~Synthetic}}
&$m=4$ & 0& 0.4 &0.71 & 44.66\%\\
\midrule
&$m=1$ 	& 10.35& 0.94	& 0.97 &  3.09\%  \\
&$m=2$ 	& 22.47 & 0.94& 0.96 &  2.10\%  \\
\rot{\rlap{\hspace{-0.1cm }FICO~ ~~}}
\rot{\rlap{ \hspace{-0.4cm }\small{Fig \ref{fig:fairnessFICOWHA} \& \ref{fig:accuracyFICOWHA}}}}
&$m=3$ 	& 0 & 0.70 & 0.93 & 24.73\%   \\
&$m=4$ & 0& 0.70 & 0.90 & 22.22\%\\
\bottomrule 
\end{tabular}
\label{Table}
\vspace{-7pt}
\end{table}

\begin{figure*}[t]
	\centering
	\begin{minipage}{.30\textwidth}
		\centering
		\includegraphics[width=1\linewidth]{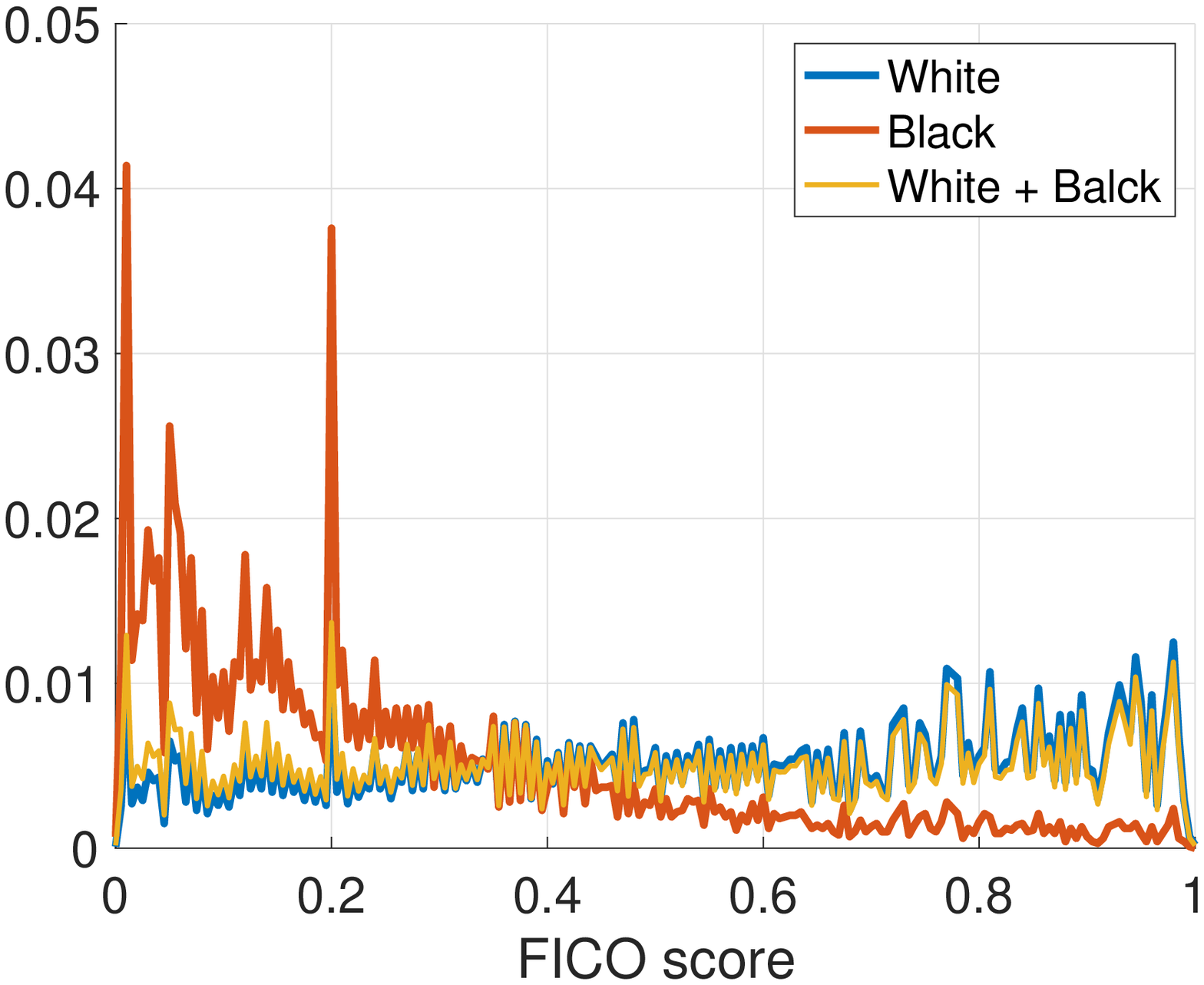}
		\caption{PMF of FICO score for \texttt{Black} and \texttt{White} social groups. }
		\label{fig:pdfFICO}
	\end{minipage}%
	\hfill 
	\begin{minipage}{.32\textwidth}
		\centering
		\includegraphics[width=1\linewidth]{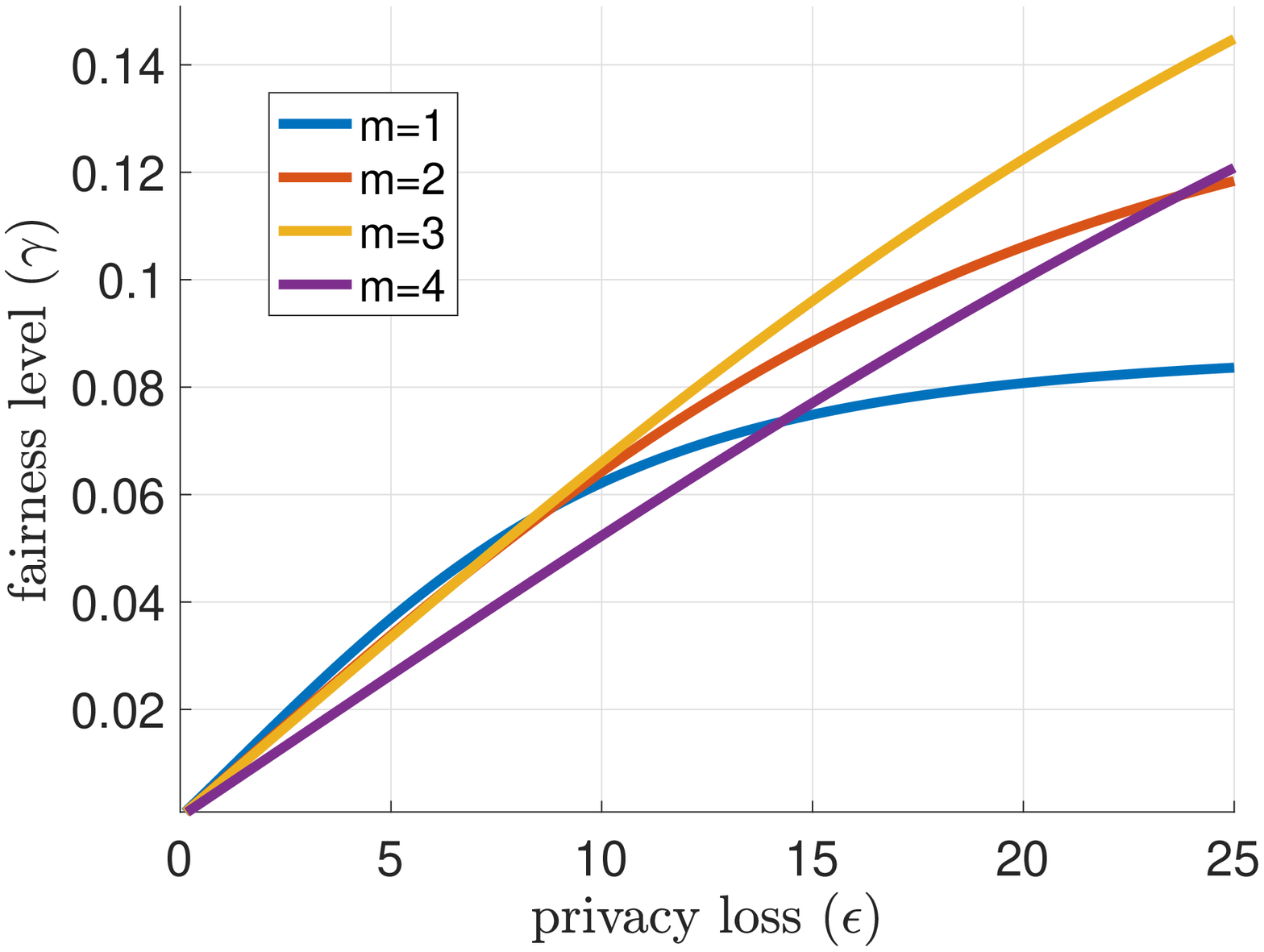}
		\caption{Fairness $\gamma(\epsilon)$ when $m$ individuals from \texttt{White} and \texttt{Black}  groups are selected using $\mathscr{A}_{\epsilon}(\cdot)$ and $\mathscr{B}_{\epsilon}(\cdot)$
		}
		\label{fig:fairnessFICO}
	\end{minipage}
	\hfill 
	\begin{minipage}{.32\textwidth}
		\centering
		\includegraphics[width=1\linewidth]{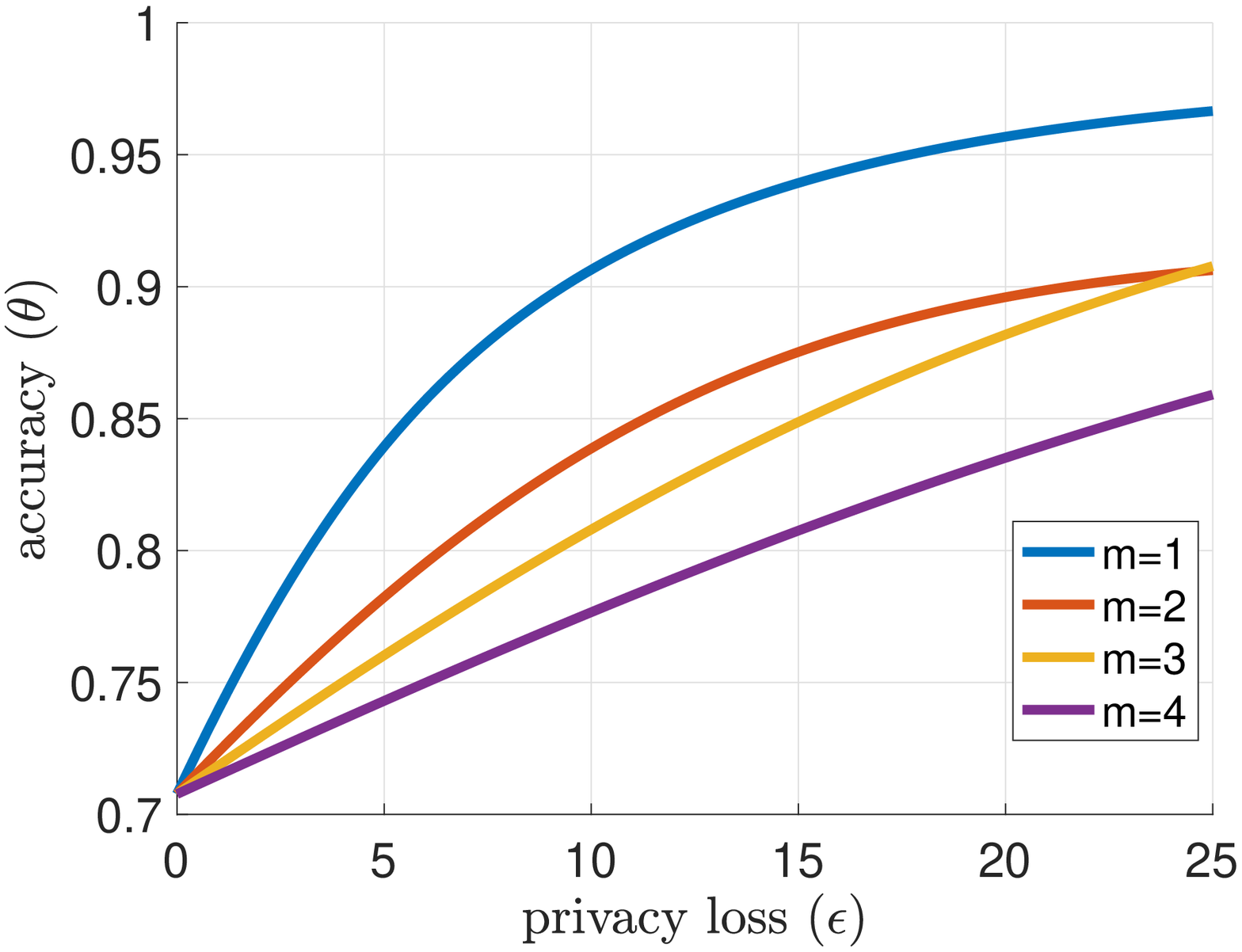}
		\caption{ Accuracy $\theta(\epsilon)$ when $m$ individuals from \texttt{White} and \texttt{Black} groups are selected using $\mathscr{A}_{\epsilon}(\cdot)$ and $\mathscr{B}_{\epsilon}(\cdot)$.
		}
		\label{fig:accuracyFICO}
	\end{minipage}
\end{figure*}
\begin{figure*}[!t]
	\centering
	\begin{minipage}{.3\textwidth}
		\centering
		\includegraphics[width=1\linewidth]{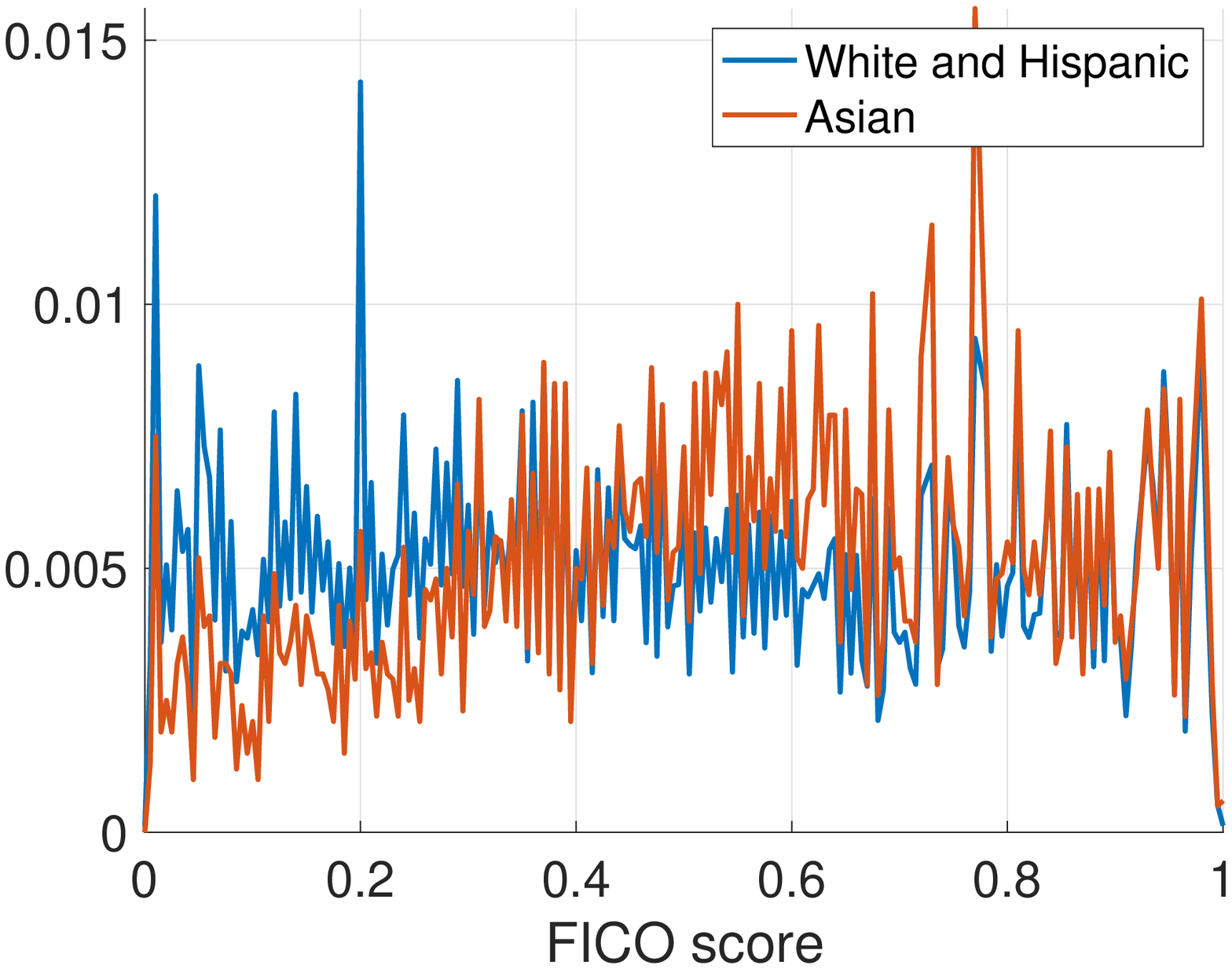}
		\caption{PMF of FICO score for \texttt{White-Hispanic} and \texttt{Asian} social groups. }
		\label{fig:pdfFICOWHA}
	\end{minipage}%
	\hfill 
	\begin{minipage}{.3\textwidth}
		\centering
		\includegraphics[width=1\linewidth]{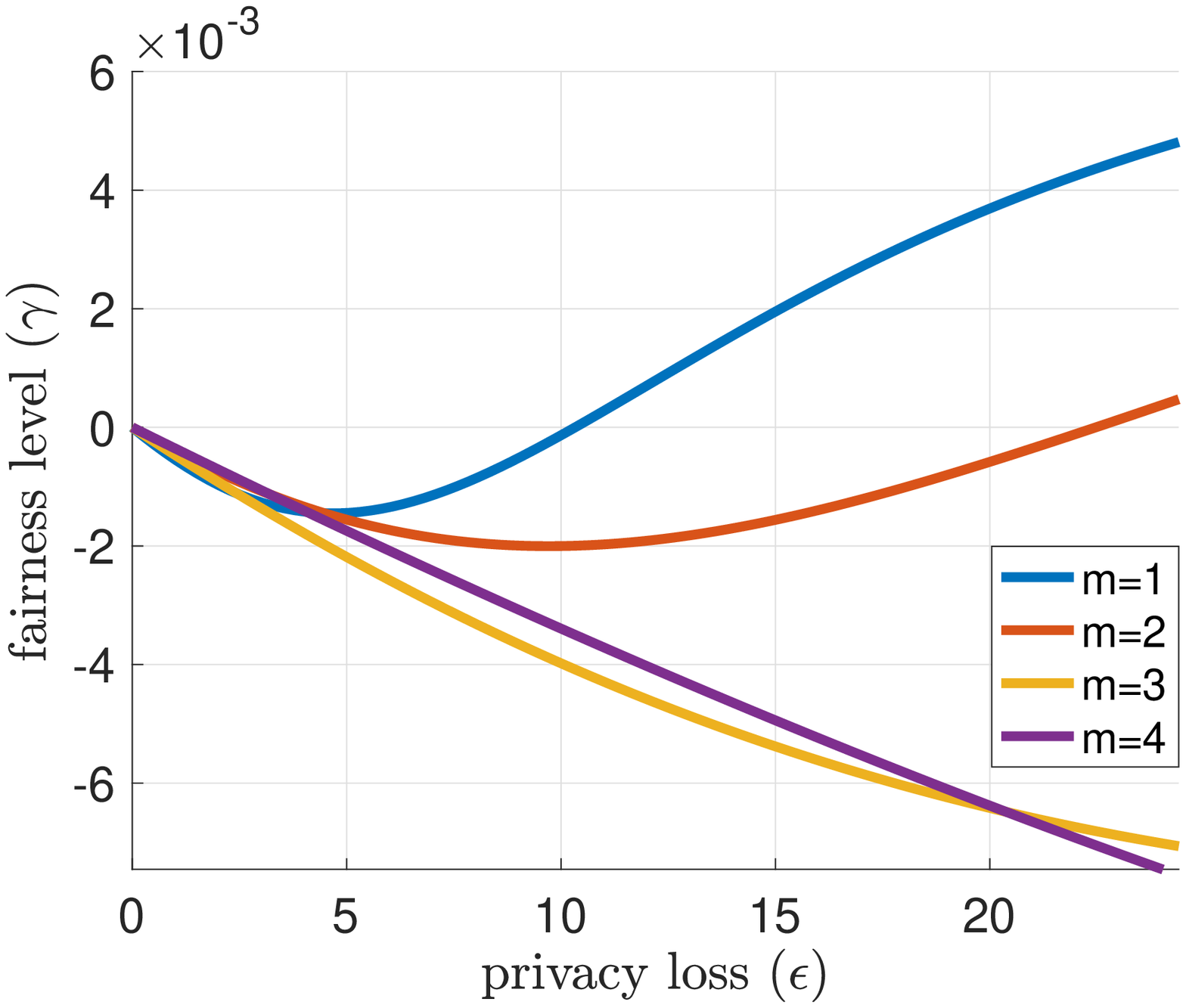}
		\caption{Fairness level $\gamma(\epsilon)$ when $m$ individuals from \texttt{White-Hispanic} and \texttt{Asian} social groups are selected using $\mathscr{A}_{\epsilon}(\cdot)$ and $\mathscr{B}_{\epsilon}(\cdot)$.
		}
		\label{fig:fairnessFICOWHA}
	\end{minipage}
	\hfill 
	\begin{minipage}{.3\textwidth}
		\centering
		\includegraphics[width=1\linewidth]{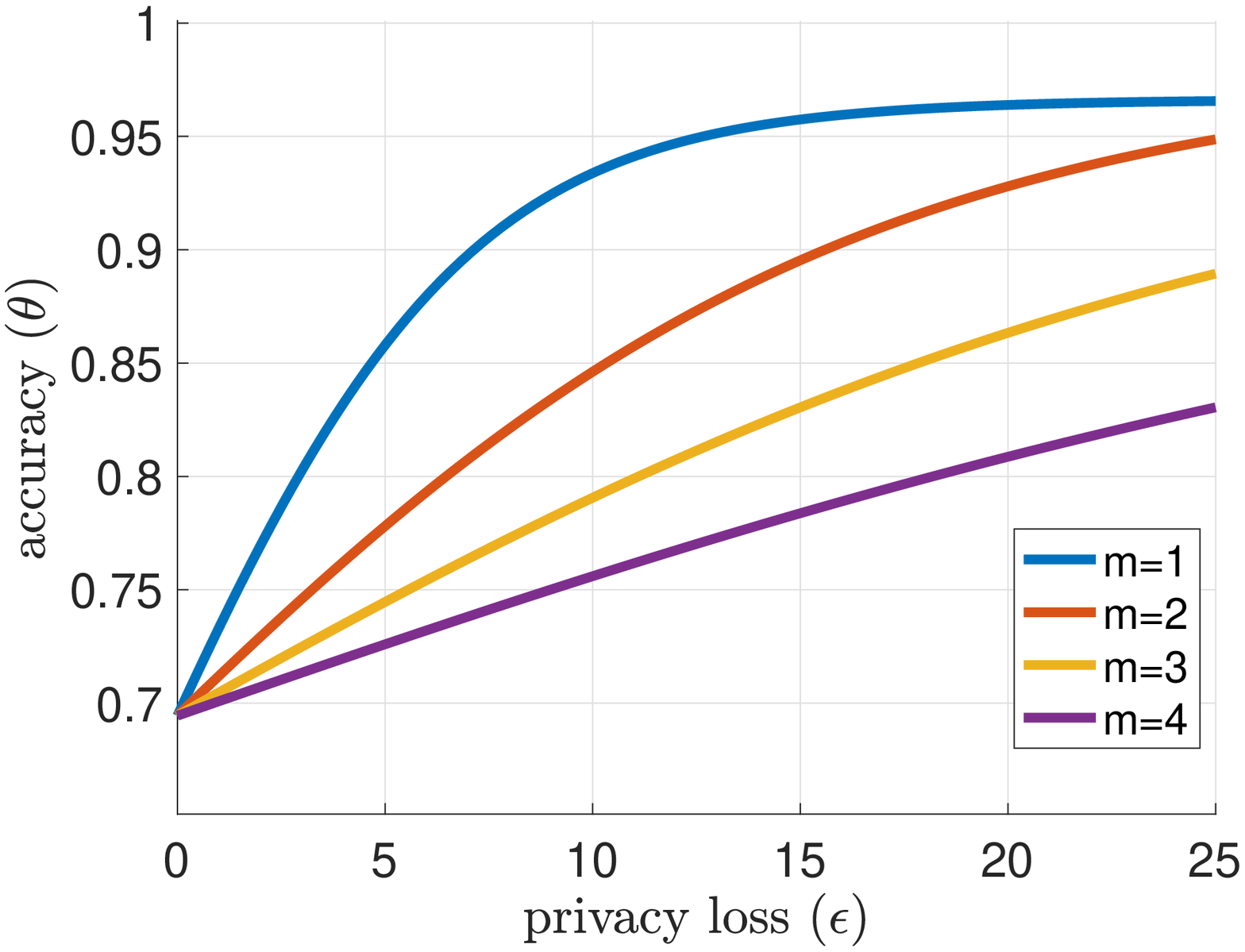}
		\caption{Accuracy level $\theta(\epsilon)$ when $m$ individuals from \texttt{White-Hispanic} and \texttt{Asian} social groups are selected using $\mathscr{A}_{\epsilon}(\cdot)$ and $\mathscr{B}_{\epsilon}(\cdot)$.
		}
		\label{fig:accuracyFICOWHA}
	\end{minipage}
\end{figure*}
\textbf{Case study 2: FICO score}

\revv{We conduct two experiments using FICO credit score dataset \cite{reserve2007report}. FICO scores are widely used in the United States to predict how likely an applicant is to pay back a loan. FICO scores are ranging from 350 to 850. However, we normalize them to range from zero to one. The FICO credit score dataset has been processed by Hardt \textit{et al.} \cite{hardt2016equality} to generate CDF and non-default rate (i.e., $\Pr(Y=1|R=\rho)$) for different social groups (Asian, White, Hispanic, and Black).}
	
	First, we consider a setting where individuals from White and Black groups are selected based on their FICO credit scores using the exponential mechanism. 
Figure  \ref{fig:pdfFICO} illustrates the PMF of FICO score for {White} and {Black} groups. It shows PMF is (approximately) decreasing  for {Black} group while is (approximately) increasing for {White} group. Moreover, the overall PMF for two groups is (approximately) uniform and remains constant. 
 As shown in Figure \ref{fig:fairnessFICO} and \ref{fig:accuracyFICO}, both accuracy $\theta(\epsilon)$ and fairness $\gamma(\epsilon)$ are increasing in $\epsilon$. Therefore, both algorithm $\mathscr{A}_{\epsilon}(\cdot)$ and $\mathscr{B}_{\epsilon}(\cdot)$ cannot be perfectly fair, and the conditions in Theorem \ref{theo:fair} and \ref{theo:M} do not hold in this example. 


\newrev{In the next experiment, we combine  {White} and {Hispanic} applicants into one group and regard {Asian} applicants as the other social group. 	 Figure \ref{fig:pdfFICOWHA} illustrates the PMF of FICO score for these two groups. In this example, the conditions of Theorem \ref{theo:fair} and  Theorem \ref{theo:M}  are satisfied for $m\in\{1,2\}$. 
When $m\in\{1,2\}$, perfect fairness is achievable   for some $\epsilon_o>0$ and leads to a slight decrease in accuracy compared to  non-private algorithms  $\mathscr{A}(\cdot)$ and $\mathscr{B}(\cdot)$ (see Table \ref{Table}).}  

\section{Conclusion}\label{sec:conclusion}
In this paper, we consider a common scenario in job/loan applications where a decision-maker selects a limited number of people from an applicant pool based on their qualification scores. These scores are generated by a pre-trained supervised learning model, which may be biased against certain social groups, and the use of such a model may violate an applicant's privacy. Within this context, we investigated the possibility of using an exponential mechanism to address the \revv{privacy} and unfairness issue. We \xr{show} that this mechanism can \xr{be used} as a post-processing step to improve fairness and \revv{privacy} of the pre-trained supervised model. 
Moreover, we identified conditions under which the exponential mechanism is able to make the selection procedure  \textit{perfectly} fair.

\bibliographystyle{plain}
\bibliography{papers.bib}

\newpage 
\onecolumn
\section*{Appendix}

\subsection*{Proofs}

\textbf{Proof of Lemma \ref{lem:convegance}:}

1. Suppose random variable $R_i$ is a mapping from sample space $\Omega$ to $\mathbb{R}$. Note that $Z_i = \left\lbrace\begin{array}{ll}
0 &\mbox{if}~ R_i \neq \max_{j} R_j\\
\frac{1}{{N}_{\max}}&o.w.
\end{array}\right.$, and $ Z_{i,\epsilon} = \frac{\exp\{\epsilon\cdot \frac{R_i}{2}\}}{\sum_{j \in \mathcal{N}} \exp\{\epsilon\cdot \frac{R_j}{2}\} }$. Moreover, we have,  
$$ \lim_{\epsilon \to \infty} \frac{\exp\{\epsilon \frac{R_i(\omega_i)}{2}\}}{\sum_{j=1}^n \exp\{\epsilon \frac{R_j(\omega_j)}{2}\}} = \left\lbrace\begin{array}{ll}
0 &\mbox{if}~ R_i(\omega_i) \neq \max_{j} R_j(\omega_j)\\
\frac{1}{|\arg\max_j R_j(\omega_j)|}&o.w.
\end{array}\right., ~~~~~\forall (\omega_1,\ldots,\omega_n)\in\Omega^n,$$ where $|\arg\max_j R_j(\omega_j)|$  is the cardinality of set   $\arg\max_j R_j(\omega_j)$.  
Therefore, random variable $Z_{i,\epsilon}$ converges to $Z_i$ surely as $\epsilon \to +\infty$.

2. 
Note that $\gamma(\epsilon)$ is a continuous function because it can be written  as the summation of $(n')^n$ continuous functions (continuous in $\epsilon$), 
\begin{eqnarray}\label{eq:h}
\gamma(\epsilon) &=& 	E\left\lbrace Z_{i,\epsilon} | A_i = 0,Y_i =1\right\rbrace - E\left\lbrace Z_{i,\epsilon} | A_i = 1,Y_i =1\right\rbrace \nonumber \\&=& \sum_{(r_1,\ldots,r_n) \in \mathcal{R}^n} \frac{\exp\{\epsilon\cdot \frac{r_i}{2}\}}{\sum_{k \in \mathcal{N}} \exp\{\epsilon\cdot \frac{r_k}{2}\} } \left(\prod_{k\neq i} f_{R}(r_k)\right) (f_{R|A=0,Y=1}(r_i) - f_{R|A=1,Y=1}(r_i)),
\end{eqnarray}
where $f_R(\cdot)$ is the PMF of $R$, and $f_{R|A=a,Y=y}(\cdot)$ is the PMF of $R$ conditional on $A=a$ and $Y=y$. 

It is easy to see that  $\lim_{\epsilon\to +\infty} \gamma(\epsilon)$ exists and is given by, 
\begin{eqnarray}
\lim_{\epsilon\to +\infty} \gamma(\epsilon) 
&=&\lim_{\epsilon \to +\infty}	\sum_{(r_1,\ldots,r_n) \in \mathcal{R}^n} \frac{\exp\{\epsilon\cdot \frac{r_i}{2}\}}{\sum_{k \in \mathcal{N}} \exp\{\epsilon\cdot \frac{r_k}{2}\} } \left(\prod_{k\neq i} f_R(r_k)\right) (f_{R|A=0,Y=1}(r_i) - f_{R|A=1,Y=1}(r_i))\nonumber \\
&=&	\sum_{(r_1,\ldots,r_n) \in \mathcal{R}^n} \lim_{\epsilon \to +\infty} \frac{\exp\{\epsilon\cdot \frac{r_i}{2}\}}{\sum_{k \in \mathcal{N}} \exp\{\epsilon\cdot \frac{r_k}{2}\} } \left(\prod_{k\neq i} f_R(r_k)\right) (f_{R|A=0,Y=1}(r_i) - f_{R|A=1,Y=1}(r_i)) \label{eq:1}  \\
&=& {E}\{\lim_{\epsilon \to +\infty}Z_{i,\epsilon}|A_i=0,Y_i=1\} - {E}\{\lim_{\epsilon \to +\infty} Z_{i,\epsilon} |A_i=1,Y_i=1\}\nonumber\\
&=&{E}\{Z_i |A_i=0,Y_i=1\} - {E}\{Z_i |A_i=1,Y_i=1\}.\label{eq:2}
\end{eqnarray}  
Equation \eqref{eq:1} holds because $\gamma(\epsilon)$ is the summation of a finite number of continuous functions, and equation \eqref{eq:2} holds since $Z_{i,\epsilon}$ converges surely towards $Z_{i}$. 

\vspace{0.5cm}
\textbf{Proof of Theorem \ref{theo:fair}: }

 Suppose $ E\{R_i|A_i=0,Y_i=1\} > E\{R_i|A_i=1,Y_i=1\}$, and ${E}\{Z_i |A_i=0,Y_i=1\} <{E}\{Z_i |A_i=1,Y_i=1\}$. As mentioned before, $\gamma(0) = 0$. Since ${E}\{Z_i |A_i=0,Y_i=1\} <{E}\{Z_i |A_i=1,Y_i=1\}$, it is easy to see that $\gamma_{\infty}<0$.

Now, we calculate the derivative of function $\gamma(\epsilon)$ at $\epsilon=0$. 

\begin{eqnarray}\label{eq:derivative}
	\gamma'(\epsilon) &=& \frac{d~ \gamma(\epsilon)}{d ~\epsilon} = {E}\left\lbrace \frac{d}{d~ \epsilon }
	\frac{\exp\{\epsilon \frac{R_i}{2}\}}{\sum_{j=1}^n\exp\{\epsilon\frac{R_j}{2}\}} \middle| A_i = 0, Y_i=1 \right\rbrace - {E}\left\lbrace \frac{d}{d~ \epsilon }
	\frac{\exp\{\epsilon \frac{R_i}{2}\}}{\sum_{j=1}^n\exp\{\epsilon\frac{R_j}{2}\}} \middle| A_i = 1, Y_i=1 \right\rbrace \nonumber \\
	&=& {E}\left\lbrace 
	\frac{\frac{R_i}{2}\exp\{\epsilon \frac{R_i}{2}\} \sum_{j=1}^n\exp\{\epsilon\frac{R_j}{2}\} - \exp\{\epsilon \frac{R_i}{2}\}\sum_{j=1}^n\frac{R_j}{2}\exp\{\epsilon\frac{R_j}{2}\} }{\left (\sum_{j=1}^n\exp\{\epsilon\frac{R_j}{2}\}\right)^2} \middle| A_i = 0, Y_i=1 \right\rbrace\nonumber \\
	&-&{E}\left\lbrace 
	\frac{\frac{R_i}{2}\exp\{\epsilon \frac{R_i}{2}\} \sum_{j=1}^n\exp\{\epsilon\frac{R_j}{2}\} - \exp\{\epsilon \frac{R_i}{2}\}\sum_{j=1}^n\frac{R_j}{2}\exp\{\epsilon\frac{R_j}{2}\} }{\left (\sum_{j=1}^n\exp\{\epsilon\frac{R_j}{2}\}\right)^2} \middle| A_i = 1, Y_i=1 \right\rbrace \nonumber \\
	\gamma'(0) &=& \frac{1}{n^2}{E} \left\lbrace n\cdot \frac{R_i}{2} - \sum_{j=1}^n \frac{R_j}{2} \middle| A_i = 0, Y_i =1   \right\rbrace - \frac{1}{n^2}{E} \left\lbrace n\cdot \frac{R_i}{2} - \sum_{j=1}^n \frac{R_j}{2} \middle| A_i = 1, Y_i=1  \right\rbrace \nonumber \\ 
	&=& \frac{n-1}{2n^2} (E\{R_i|A_i=0,Y_i=1\} - E\{R_i|A_i=1,Y_i=1\}) > 0 
\end{eqnarray}

Since $\gamma(\cdot)$ is continuous and  $\gamma(0) = 0$ and $\gamma'(0) >0$, there exists $\overline{\epsilon} $ such that $\gamma(\overline{\epsilon}) >0$. As $\gamma(+\infty) <0 $ and $\gamma(\overline{\epsilon} ) >0$, by the intermediate value theorem there exists $\epsilon_o > \overline{\epsilon}$ such that $\gamma(\epsilon_o) = 0$, and $\mathscr{A}_{\epsilon_o}(\cdot)$ is perfectly fair.

\vspace{0.5cm}
\textbf{Proof of Theorem \ref{theo:fair2}:} 

We simplify the derivative of function $\gamma(\epsilon)$ calculated in equation \eqref{eq:derivative}.
\begin{eqnarray}\label{eq:derivative2}
\gamma'(\epsilon) 
&=& {E}\left\lbrace 
\frac{\frac{R_i}{2}\exp\{\epsilon \frac{R_i}{2}\} \sum_{j=1}^n\exp\{\epsilon\frac{R_j}{2}\} - \exp\{\epsilon \frac{R_i}{2}\}\sum_{j=1}^n\frac{R_j}{2}\exp\{\epsilon\frac{R_j}{2}\} }{\left (\sum_{j=1}^n\exp\{\epsilon\frac{R_j}{2}\}\right)^2} \middle| A_i = 0, Y_i=1 \right\rbrace\nonumber \\
&-&{E}\left\lbrace 
\frac{\frac{R_i}{2}\exp\{\epsilon \frac{R_i}{2}\} \sum_{j=1}^n\exp\{\epsilon\frac{R_j}{2}\} - \exp\{\epsilon \frac{R_i}{2}\}\sum_{j=1}^n\frac{R_j}{2}\exp\{\epsilon\frac{R_j}{2}\} }{\left (\sum_{j=1}^n\exp\{\epsilon\frac{R_j}{2}\}\right)^2} \middle| A_i = 1, Y_i=1 \right\rbrace \nonumber \\
& =&  \nonumber 
{E}\left\lbrace 
\frac{ \sum_{j=1}^n\frac{R_i-R_j}{2}\exp\{\epsilon\frac{R_i+R_j}{2}\} }{\left (\sum_{j=1}^n\exp\{\epsilon\frac{R_j}{2}\}\right)^2} \middle| A_i = 0, Y_i =1 \right\rbrace - {E}\left\lbrace 
\frac{ \sum_{j=1}^n\frac{R_i-R_j}{2}\exp\{\epsilon\frac{R_i+R_j}{2}\} }{\left (\sum_{j=1}^n\exp\{\epsilon\frac{R_j}{2}\}\right)^2} \middle| A_i = 1, Y_i=1 \right\rbrace\nonumber 
\\
 &=& \sum_{(r_1,\cdots, r_n)\in \mathcal{R}^n} \frac{\sum_{j=1}^n(r_i-r_j)\exp\{\frac{\epsilon\cdot (r_i+r_j)}{2}\}}{\left( \sum_{j=1}^n \exp\{\epsilon \cdot \frac{r_j}{2}\}\right)^2} \left( \prod_{j\neq i} f_{R}(r_j) \right) \left(f_{R|A=0,Y=1}(r_i) - f_{R|A=1,Y=1}(r_i)\right) \nonumber 
\end{eqnarray}
The derivative of $\gamma(\epsilon)$ consists of $n\cdot (n')^n$ terms. Consider one of these terms associated with $\underline{\pmb{r}} = (\underline{r}_1 , \cdots, \underline{r}_n)$, where $\underline{r}_i = t$ and $\underline{r}_k = l$:  
$$ \Gamma_1 = \frac{(t-l)\exp\{\frac{\epsilon\cdot (t+l)}{2}\}}{\left( \sum_{j=1}^n \exp\{\epsilon \cdot \frac{\underline{r}_j}{2}\}\right)^2} \left( \prod_{j\neq i} f_R(\underline{r}_j) \right) \left(f_{R|A=0,Y=1}(t) - f_{R|A=1,Y=1}(t)\right) $$
Corresponding to this term, there exists another term associated with $\overline{\pmb{r}} = (\overline{r}_1 , \cdots, \overline{r}_n)$, where $\overline{r}_i = l$ and $\overline{r}_k = t$ and $\overline{r}_j = \underline{r}_j , \forall j\neq i,k $: 
$$ \Gamma_2 = \frac{(l-t)\exp\{\frac{\epsilon\cdot (t+l)}{2}\}}{\left( \sum_{j=1}^n \exp\{\epsilon \cdot \frac{\overline{r}_j}{2}\}\right)^2} \left( \prod_{j\neq i} f_R(\overline{r}_j) \right) \left(f_{R|A=0,Y=1}(l) - f_{R|A=1,Y=1}(l)\right) $$
Let $t>l$. If we show that $(t-l) f_{R}(l) (f_{R|A=0,Y=1}(t) - f_{R|A=1,Y=1}(t)) +(l-t) f_R(t) (f_{R|A=0,Y=1}(l) - f_{R|A=1,Y=1}(l))>0, \forall 1\geq t > l \geq 0 $, then $\Gamma_1 + \Gamma_2$ would be positive. Consequently,  the derivative would be positive.  It is easy to see that if $f_R(\rho)$ is strictly decreasing and $f_{R|A=0,Y=1}(\rho)- f_{R|A=1,Y=1}(\rho)$ is strictly increasing and $f_{R|A=0,Y=1}(\rho) - f_{R|A=1,Y=1}(\rho)\geq 0$  for $\rho = \rho_2,\ldots, \rho_{n'}$, then  $(t-l) f_R(l) (f_{R|A=0,Y=1}(t) - f_{R|A=1,Y=1}(t)) +(l-t) f_R(t) (f_{R|A=0,Y=1}(l) - f_{R|A=1,Y=1}(l))>0, \forall 1\geq t > l \geq 0 $, and $\gamma'(\epsilon)$ is positive, and $\gamma(\epsilon)$ is strictly increasing. Therefore, $0 = \gamma(0)<\gamma(\epsilon)<\gamma_\infty$.

\vspace{0.5cm}
\textbf{Proof of Theorem \ref{theo:nInfinity}:} 

\xr{
	As we showed in the proof of Theorem \ref{lem:convegance}, $\gamma(\epsilon)$ is continuous and $\lim_{\epsilon \to +\infty} \gamma(\epsilon)$ exists. Since $\gamma(0) = 0$ and $|\gamma_{\infty}| >0$ and $\gamma(\epsilon)$ is continuous, for any arbitrary positive value $v < |\gamma_{\infty}|$, there exists a privacy loss $\epsilon'$ such that $|\gamma(\epsilon')| = v$. This proves the theorem. 
}

\vspace{0.5cm}
\textbf{Proof of Theorem \ref{theo:acc}:} 

In order to show that $\theta(\epsilon)$ is increasing, we should show that $E\{Z_{i,\epsilon}|Y_i=1\}$ is increasing. We have, 
\begin{eqnarray}
l(\epsilon) &=& 	E\{Z_{i,\epsilon}|Y_i=1\} = \sum_{(r_1,\ldots,r_n) \in \mathcal{R}^n} \frac{\exp\{\epsilon\cdot \frac{r_i}{2}\}}{\sum_{k \in \mathcal{N}} \exp\{\epsilon\cdot \frac{r_k}{2}\} } \left(\prod_{k\neq i} f_{R}(r_k)\right) f_{R|Y=1}(r_i) \nonumber \\
l'(\epsilon) &=&  \sum_{(r_1,\ldots,r_n) \in \mathcal{R}^n}\frac{\sum_{k=1}^n\frac{r_i-r_k}{2} \exp\{\epsilon\cdot \frac{r_i+r_k}{2}\}}{(\sum_{k \in \mathcal{N}} \exp\{\epsilon\cdot \frac{r_k}{2}\})^2 } \left(\prod_{k\neq i} f_{R}(r_k)\right) f_{R|Y=1}(r_i)\nonumber 
\end{eqnarray}
In order to show that $l'(\epsilon)\geq0$, it is sufficient to show that $\frac{\rho'-\rho}{2} f_R(\rho) \cdot f_{R|Y=1}(\rho') + \frac{\rho-\rho'}{2} f_R(\rho') \cdot f_{R|Y=1}(\rho) \geq 0$, or equivalently, 
$f_R(\rho') \cdot f_{R|Y=1}(\rho) \geq f_R(\rho) \cdot f_{R|Y=1}(\rho') ~for~ \rho>\rho'$. 

We have,
\begin{eqnarray}\label{eq:eq}
	&&f_R(\rho') \cdot f_{R|Y=1}(\rho) \geq f_R(\rho) \cdot f_{R|Y=1}(\rho') \nonumber\\ 
	& \Longleftrightarrow &f_{R|Y=1}(\rho) (\Pr(Y=0) f_{R|Y=0}(\rho')+ \Pr(Y=1) f_{R|Y=1}(\rho'))\nonumber \\&& \geq 	f_{R|Y=1}(\rho') (\Pr(Y=0) f_{R|Y=0}(\rho)+ \Pr(Y=1) f_{R|Y=1}(\rho))\nonumber\\
	& \Longleftrightarrow& f_{R|Y=1}(\rho)f_{R|Y=0}(\rho') \geq f_{R|Y=1}(\rho')f_{R|Y=0}(\rho)
\end{eqnarray}
Equation \eqref{eq:eq} holds under Assumption \ref{assump1}. Therefore, $\theta(\epsilon)$ is an increasing function.


\vspace{0.5cm}
\textbf{Proof of Corollary \ref{corollary}:}

		By Theorem \ref{theo:acc}, $\theta(\epsilon)$ is an increasing function. Therefore, if $\gamma(\epsilon_{\max}) \leq \gamma_{\max}$, then $\epsilon^* =  \epsilon_{\max}$. Otherwise, the decision maker has to find the largest privacy loss with fairness level $\gamma_{\max}$. Note that $ \{\epsilon\leq \epsilon_{\max}| \gamma(\epsilon) = \gamma_{\max} \}$ is a non-empty set because $\gamma(\epsilon)$ is a continuous function and $\gamma(0) = 0$ and $\gamma(\epsilon_{\max}) > \gamma_{\max} $. Therefore, the largest value in set $ \{\epsilon\leq \epsilon_{\max}| \gamma(\epsilon) = \gamma_{\max} \}$ would be the solution to optimization problem \eqref{eq:opt}. 

\vspace{0.5cm}
\textbf{Proof of Theorem \ref{theo:M}: } 

Similar to Lemma \ref{lem:convegance}, we can show that $W_{i,\epsilon}$ converges surly towards $W_i$. Moreover, we can show that $\lim_{\epsilon\to \infty} \gamma(\epsilon) = {E}\{W_i|A_i=0,Y_i=1 \} - {E}\{W_i|A_i=1,Y_i=1\}$.

Now suppose that ${E}\{W_i|A_i=0,Y_i=1 \} <{E}\{W_i|A_i=1,Y_i=1\}$ and $E\{R_i | A_i = 0, Y_i = 1\} > E\{R_i|A_i=1, Y_i=1 \}$. Note that, 
$$\gamma_{\infty} =  {E}\{W_i|A_i=0,Y_i=1 \} - {E}\{W_i|A_i=1,Y_i=1\} <0$$

Next we calculate $\gamma'(0)$. Let $\mathcal{S}_i= \{\mathcal{G}|\mathcal{G}\in \mathcal{S}, i\in \mathcal{G}\}$.  
Similar to equation \eqref{eq:derivative}, we have, 
  \begin{eqnarray}\label{eq:derivative3}
  \gamma'(\epsilon)  & =&  \nonumber {E}\left\lbrace \frac{ \sum_{\mathcal{G}\in \mathcal{S}_i, \mathcal{G}'\in \mathcal{S}}\frac{v(\mathcal{G},\mathbf{D})-v(\mathcal{G}',\mathbf{D})}{2}\exp\{\epsilon\frac{v(\mathcal{G},\mathbf{D})+v(\mathcal{G}',\mathbf{D})}{2}\} }{\left (\sum_{\mathcal{G}'\in \mathcal{S}}\exp\{\epsilon\frac{v(\mathcal{G}',\mathbf{D})}{2}\}\right)^2} \middle| A_i=0, Y_i=1 \right\rbrace  \nonumber \\&-&   {E}\left\lbrace 
\frac{ \sum_{\mathcal{G}\in \mathcal{S}_i, \mathcal{G'}\in \mathcal{S}}\frac{v(\mathcal{G},\mathbf{D})-v(\mathcal{G}',\mathbf{D})}{2}\exp\{\epsilon\frac{v(\mathcal{G},\mathbf{D})+v(\mathcal{G}',\mathbf{D})}{2}\} }{\left (\sum_{\mathcal{G}\in S}\exp\{\epsilon\frac{v(\mathcal{G}',\mathbf{D})}{2}\}\right)^2} \middle| A_i= 1, Y_i=1 \right\rbrace \\
 \gamma(0)&=&   {E}\left\lbrace 
  \frac{ \sum_{\mathcal{G}\in \mathcal{S}_i, \mathcal{G'}\in \mathcal{S}}\frac{v(\mathcal{G},\mathbf{D})-v(\mathcal{G}',\mathbf{D})}{2}}{\left ({{n \choose m}}\right)^2} \middle| A_i = 0,Y_i=1 \right\rbrace  \nonumber -  {E}\left\lbrace 
  \frac{ \sum_{\mathcal{G}\in \mathcal{S}_i, \mathcal{G'}\in \mathcal{S}}\frac{v(\mathcal{G},\mathbf{D})-v(\mathcal{G'},\mathbf{D})}{2}}{\left ({{n \choose m}}\right)^2} \middle| A_i =0, Y_i = 1 \right\rbrace  \nonumber \\&=& \frac{{n-1 \choose m-1} {n-1 \choose m}}{2m {n \choose m}^2}  (E(R_i | A_i = 0, Y_i = 1) - E(R_i|A_i=1, Y_i=1 )) > 0,
  \end{eqnarray}

  Since $\gamma(0) = 0$ and $\gamma'(0) >0$,  there exists $\overline{\epsilon} >0$ such that $\gamma(\overline{\epsilon}) >0 $. By the Intermediate Value Theorem, there exists $\epsilon_o>\overline{\epsilon}$ such that $\gamma(\epsilon_o) = 0$.

\subsection*{ For any $m\geq 1$, accuracy $\theta(\epsilon)$ is increasing under Assumption 1}
In Theorem \ref{theo:acc}, we showed that accuracy $\theta(\epsilon)$ is increasing for $m=1$. Here we show that $\theta(\epsilon)$ is increasing for all $m\geq 1$. In order to do so, we need to prove that $E\left\lbrace W_{i,\epsilon}|Y_i=1\right\rbrace $ is increasing in $\epsilon$.
\begin{eqnarray*}
l(\epsilon) &=& 	E\left\lbrace W_{i,\epsilon}|Y_i=1\right\rbrace = \sum_{\mathcal{G}\in \mathcal{S}_i}E\left\lbrace\frac{\exp\{\epsilon\cdot \frac{\sum_{j\in \mathcal{G} }R_j}{2}\}}{\sum_{\mathcal{G'}\in \mathcal{S}} \exp\{\epsilon \cdot \frac{\sum_{j\in \mathcal{G'} R_j}}{2}\}}  \middle| Y_i=1\right\rbrace\\
l'(\epsilon) &=& 	E\left\lbrace W_{i,\epsilon}|Y_i=1\right\rbrace = \sum_{\mathcal{G}\in \mathcal{S}_i, \mathcal{G''}\in \mathcal{S}}  E\left\lbrace\frac{\frac{\sum_{j\in \mathcal{G}}R_j  -\sum_{j\in \mathcal{G''}} R_j}{2}\exp\{\epsilon\cdot \frac{\sum_{j\in \mathcal{G}}R_j  +\sum_{j\in \mathcal{G''}} R_j}{2}\}}{\sum_{\mathcal{G'}\in \mathcal{S}} \exp\{\epsilon \cdot \frac{\sum_{j\in \mathcal{G'} R_j}}{2}\}} \middle| Y_i=1 \right\rbrace\\
(R_j ~'s~ are ~ i.i.d.)  &=&  \sum_{\mathcal{G}\in \mathcal{S}_i, \mathcal{G''}\in \mathcal{S}\backslash \mathcal{S}_i}  E\left\lbrace\frac{\frac{\sum_{j\in \mathcal{G}}R_j  -\sum_{j\in \mathcal{G''}} R_j}{2}\exp\{\epsilon\cdot \frac{\sum_{j\in \mathcal{G}}R_j  +\sum_{j\in \mathcal{G''}} R_j}{2}\}}{\sum_{\mathcal{G'}\in \mathcal{S}} \exp\{\epsilon \cdot \frac{\sum_{j\in \mathcal{G'} R_j}}{2}\}} \middle| Y_i=1 \right\rbrace
\end{eqnarray*}

Note that since $R_j$'s are i.i.d,  $E\left\lbrace\frac{\frac{R_k -R_{k'}}{2}\exp\{\epsilon\cdot \frac{\sum_{j\in \mathcal{G}}R_j  +\sum_{j\in \mathcal{G''}} R_j}{2}\}}{\sum_{\mathcal{G'}\in \mathcal{S}} \exp\{\epsilon \cdot \frac{\sum_{j\in \mathcal{G'} R_j}}{2}\}} \middle| Y_i=1 \right\rbrace = 0, for ~ k,k' \neq i, k\in \mathcal{G} \in S, k'\in \mathcal{G''}\in \mathcal{S}\backslash \mathcal{S}_i, k'\notin \mathcal{G}, k\notin \mathcal{G''}$. We have, 
\begin{eqnarray*}
l'(\epsilon)&=& \sum_{\mathcal{G}\in \mathcal{S}_i, \mathcal{G''}\in \mathcal{S}\backslash \mathcal{S}_i}  E\left\lbrace\frac{\frac{R_i  - R_k}{2}\exp\{\epsilon\cdot \frac{\sum_{j\in \mathcal{G}}R_j  +\sum_{j\in \mathcal{G''}} R_j}{2}\}}{\sum_{\mathcal{G'}\in \mathcal{S}} \exp\{\epsilon \cdot \frac{\sum_{j\in \mathcal{G'} R_j}}{2}\}} \middle| Y_i=1 \right\rbrace  , k\in \mathcal{G''}, k\notin \mathcal{G}\\
l'(\epsilon) &=&   \sum_{\mathcal{G}\in \mathcal{S}_i, \mathcal{G''}\in \mathcal{S}\backslash \mathcal{S}_i} \sum_{(r_1,\ldots,r_n)\in \mathcal{R}^n} \frac{\frac{r_i  - r_k}{2}\exp\{\epsilon\cdot \frac{\sum_{j\in \mathcal{G}}r_j  +\sum_{j\in \mathcal{G''}} r_j}{2}\}}{\sum_{\mathcal{G'}\in \mathcal{S}} \exp\{\epsilon \cdot \frac{\sum_{j\in \mathcal{G'} r_j}}{2}\}}\left(\prod_{j\neq i} f_{R}(r_j)\right) f_{R|Y=1}(r_i)
\end{eqnarray*}
In order to show that $l'(\epsilon)>0$, it is sufficient to show that $\frac{\rho'-\rho}{2} f_R(\rho) \cdot f_{R|Y=1}(\rho') + \frac{\rho-\rho'}{2} f_R(\rho') \cdot f_{R|Y=1}(\rho) > 0$, or equivalently 
$f_R(\rho') \cdot f_{R|Y=1}(\rho) > f_R(\rho) \cdot f_{R|Y=1}(\rho)$ for $\rho>\rho'$.  The rest of the proof is similar to the proof of Theorem \ref{theo:acc} and is omitted to avoid repetition. 

\subsection*{Notes on the numerical example}

\textbf{Details about case study 1 (Figures 1, 2, 3)}

In this part, we investigate whether the conditions in Theorems \ref{theo:fair} and \ref{theo:M} hold or not. First, we find mean value $E(Z_i|Y_i=1,A_i=a)$ and $E(W_i|Y_i=1,A_i=a)$ numerically. In order to do so, we generate  10000 samples of $R_1,\cdots,R_n$  conditional on $Y_i =1 $ and $A_i=a$ and calculate samples of $Z_i$ and $W_i$ and their empirical mean. We have, 

\begin{table}[!htbp]
	\centering
	\caption{Expected values $Z_i$ and $W_i$  conditional on $Y_i=1$ and $A_i=a$}\label{Table2}
	\begin{tabular}{ cccc } 
		\toprule & $E(Z_i|Y_i=1,A_i=0)$ & $E(Z_i|Y_i=1,A_i=1)$ & $E(Z_i|Y_i=1,A_i=0) - E(Z_i|Y_i=1,A_i=1)$\\  \midrule 
		 $m=1$ & 0.3426 & 0.2392 & 0.1035\\ \midrule
		& $E(W_i|Y_i=1,A_i=0)$ & $E(W_i|Y_i=1,A_i=1)$ &  $E(W_i|Y_i=1,A_i=0)-E(W_i|Y_i=1,A_i=1)$\\  \midrule 
		$m=2$ & 0.5213 & 0.4337 & 0.0875\\ 
		$m=3$ & 0.5941 & 0.5705 & 0.0236 \\ 
		$m=4$ & 0.5821  &0.6196& -0.0375\\
		\bottomrule
	\end{tabular}
\end{table}

Based on the proof of Theorem \ref{theo:fair} and Theorem \ref{theo:M}, the last column of Table \ref{Table2} is equal to $\lim_{\epsilon \to +\infty} \gamma(\epsilon)$ (it can be seen in figure \ref{fig:2}). Next, we calculate $E(R_i|Y_i=1, A_i=a)$ using the definition: $ E(R_i|Y_i=1, A_i=a) = \sum_{\rho \in \mathcal{R}} f_{R|Y=1,A=a}(\rho) \cdot \rho $. We have, 
	\begin{table}[!htbp]
			\centering
		\caption{Expected values $R_i$ conditional on $Y_i=1$ and $A_i=a$}\label{Table3}
		\begin{tabular}{ ccc } 
			\toprule $E(R_i|Y_i=1,A_i=0)$ & $E(R_i|Y_i=1,A_i=1)$ & $E(R_i|Y_i=1,A_i=0) - E(R_i|Y_i=1,A_i=1)$\\  \midrule 
		 0.68 & 0.72 & -0.04\\ \bottomrule
		\end{tabular}
	\end{table} 

Note that $E(R_i|Y_i=1,A_i=0) - E(Z_i|Y_i=1,A_i=1) $ and $\gamma'(0)$ have the same sign.  Given Table \ref{Table2} and Table \ref{Table3}, we can see that the conditions  in Theorems \ref{theo:fair} and \ref{theo:M} hold only for $m\in \{1,2,3\}$, and $\gamma(\epsilon)$ crosses zero when $m = 1,2,3$ (See Figure \ref{fig:2}).  

Figure \ref{fig:3} illustrates accuracy $\theta(\epsilon)$. This figure implies that accuracy is increasing in this case. We noticed that Assumption \ref{assump1} does not hold when $\rho = 0.2$ and $\rho' = 0$. In other words, we have, 
\begin{eqnarray*}
	&& \frac{\Pr(R_i = 0.2|Y_i=1)}{\Pr(R_i = 0|Y_i=1)}    = \frac{0.01}{0.01}<  \frac{\Pr(R_i = 0.2|Y_i=0)}{\Pr( R_i=0|Y_i=0)} =  \frac{0.3}{0.11}
\end{eqnarray*} 


Even though Assumption \ref{assump1}  does not hold, we can see $\theta(\epsilon)$ is increasing. Therefore, Assumption \ref{assump1} is a sufficient condition for $\theta(\epsilon)$ to be increasing.

\vspace{0.5cm}
\textbf{Details about case study 2 (Figures 4, 5, 6)}

In this part, we provide details about the experiment on the FICO scores of Black and White social groups. \newrev{In this example. $\Pr\{A=\texttt{White} \}  = 1- \Pr\{A=\texttt{Black} \} = 0.88$.} By checking Table \ref{Table4} and \ref{Table5}, we can see that the conditions of Theorem \ref{theo:fair} do not hold. Again, the forth column of Table \ref{Table4} is equal to $\lim_{\epsilon \to +\infty}\gamma(\epsilon)$, and the sign of the last column in Table \ref{Table5} is the same as the sign of $\gamma'(0)$ (Figure \ref{fig:fairnessFICO} confirms that). 

\begin{table}[!htbp]
		\centering
	\caption{Expected values $Z_i$ and $W_i$  conditional on $Y_i=1$ and $A_i=a$}\label{Table4}
	\begin{tabular}{ cccc } 
		\toprule& $E(Z_i|Y_i=1,A_i=0)$ & $E(Z_i|Y_i=1,A_i=1)$ & $E(Z_i|Y_i=1,A_i=0) - E(Z_i|Y_i=1,A_i=1)$\\  \midrule 
	 $m=1$ & 0.1425 & 0.0486  & 0.0939\\ \midrule
		& $E(W_i|Y_i=1,A_i=0)$ & $E(W_i|Y_i=1,A_i=1)$ &  $E(W_i|Y_i=1,A_i=0)-E(W_i|Y_i=1,A_i=1)$\\  \midrule 
		$m=2$ & 0.2851 & 0.1099 & 0.1752\\ 
		$m=3$ & 0.3765  & 0.1743 & 0.2022 \\ 
		$m=4$ & 0.4142  &0.2216& 0.1925\\
		\bottomrule
	\end{tabular}
\end{table}

	\begin{table}[!htbp]
			\centering
		\caption{Expected values $R_i$ conditional on $Y_i=1$ and $A_i=a$}\label{Table5}
		\begin{tabular}{ ccc } 
			\toprule $E(R_i|Y_i=1,A_i=0)$ & $E(R_i|Y_i=1,A_i=1)$ & $E(R_i|Y_i=1,A_i=0) - E(R_i|Y_i=1,A_i=1)$\\  \midrule 
		 0.6445 & 0.4694 & 0.1751\\ \bottomrule
		\end{tabular}
	\end{table} 

\vspace{0.5cm}
\textbf{Details about case study 2 (Figures 7, 8, 9)} 

\newrev{In this part, we provide more details about the experiment on the FICO scores of White, Hispanic and Asian community. In this example, $\Pr\{A= \texttt{White} \text{ or } \texttt{Hispanic}\} =  1 - \Pr\{A= \texttt{Asian}\}  = 0.9634$. Since the dataset provided by \cite{hardt2016equality} includes PMF functions $\Pr\{R=\rho|A = \texttt{White}\} $ and $\Pr\{R=\rho|A = \texttt{Hispanic}\}$, we calculated PMF function $\Pr\{R = \rho|A= \texttt{White} \text{ or } \texttt{Hispanic}\} $ under assumption that $\Pr\{A =\texttt{White} | A= \texttt{White} \text{ or } \texttt{Hispanic}\} =0.64 $.}

Figure \ref{fig:fairnessFICOWHA}  shows $\gamma(\epsilon)$ crosses  zero when $m=1, 2$. The reason that $\gamma(\epsilon)$ crosses zero for $m=1,2$ is that the conditions in Theorem \ref{theo:M} holds. Table \ref{Table6} and Table \ref{Table7} provides more details to check the condition in Theorem \ref{theo:fair} and Theorem \ref{theo:M}. 
\begin{table}[!htbp]
		\centering
	\caption{Expected values $Z_i$ and $W_i$  conditional on $Y_i=1$ and $A_i=a$}\label{Table6}
	\begin{tabular}{ cccc } 
		\toprule & $E(Z_i|Y_i=1,A_i=0)$ & $E(Z_i|Y_i=1,A_i=1)$ & $E(Z_i|Y_i=1,A_i=0) - E(Z_i|Y_i=1,A_i=1)$\\  \midrule 
		 $m=1$ & 0.1462 & 0.1389  & 0.0073\\ \midrule
		& $E(W_i|Y_i=1,A_i=0)$ & $E(W_i|Y_i=1,A_i=1)$ &  $E(W_i|Y_i=1,A_i=0)-E(W_i|Y_i=1,A_i=1)$\\ 	
		 \midrule 
		$m=2$ & 0.2781 & 0.2778 & 0.0003\\ 
		$m=3$ & 0.3635  & 0.3745 & -0.0110 \\ 
		$m=4$ & 0.4072  &0.4315 & -0.0243\\
		\bottomrule
	\end{tabular}
\end{table} 

	\begin{table}[!htbp]
			\centering
		\caption{Expected values $R_i$ conditional on $Y_i=1$ and $A_i=a$}\label{Table7}
		\begin{tabular}{ ccc } 
			\toprule $E(R_i|Y_i=1,A_i=0)$ & $E(R_i|Y_i=1,A_i=1)$ & $E(R_i|Y_i=1,A_i=0) - E(R_i|Y_i=1,A_i=1)$\\ 
			 \midrule   0.6088
			&     0.6213
			& -0.0125\\ \bottomrule
		\end{tabular}
	\end{table} 

\subsection*{Extension of our result to other score functions}

In section \ref{sec:choosingM}, we considered the following score function for the exponential mechanism,
\begin{equation*}
v(\mathcal{G},\mathbf{D}) = 	\frac{1}{m}\sum_{j\in \mathcal{G}} R_j.
\end{equation*}

It is worth mentioning that other choices of score functions may be in the decision-maker's interest. For instance, he may choose one of the following score functions, 
\begin{eqnarray}\label{eq:scoreFunction}
v(\mathcal{G},\mathbf{D}) = 	\sqrt{\frac{1}{m}\sum_{j\in \mathcal{G}} R^2_j}, ~ \text{or} ~ 
v(\mathcal{G},\mathbf{D}) = 	{\min_{j\in \mathcal{G}} R_j}~ \text{or} ~ v(\mathcal{G},\mathbf{D}) = 	{\max_{j\in \mathcal{G}} R_j}
\end{eqnarray}

Let algorithm $\mathscr{B}(D)$ be an algorithm which selects a set with the maximum score function. Moreover, let $\mathcal{S}_{\max} = \{\mathcal{G} \in \mathcal{S} |v(\mathcal{G},\mathbf{D}) = \max_\mathcal{G'} v(\mathcal{G'},\mathbf{D})  \}$. We define random variable $W_i$ and $W_{i,\epsilon}$ as follows,
\begin{equation*}
W_{i,\epsilon}  = \sum_{\mathcal{G} \in \mathcal{S}_i} \frac{\exp\{\epsilon \cdot  \frac{v(\mathcal{G},\mathbf{D})}{2}\}}{\sum_{\mathcal{G}'\in \mathcal{S}} \exp\{\epsilon \cdot \frac{v(\mathcal{G}',\mathbf{D})}{2}\}}
\end{equation*}

\begin{eqnarray}
W_i = \left\lbrace \begin{array}{ll}
0 & \mbox{ if } \mathcal{S}_i \cap \mathcal{S}_{\max} = \emptyset \\ 
\frac{1}{|\mathcal{S}_{\max} |} & o.w.
\end{array}\right. \nonumber
\end{eqnarray} 

Again, we can show that under a set of sufficient conditions, perfect fairness is achievable even if we use score functions defined in \eqref{eq:scoreFunction}.
\begin{theorem}

	\revv{There exists} $\epsilon_o >0$ such that $\gamma(\epsilon_o) = 0$ under $\mathscr{B}_{\epsilon}(.)$ if both of the following \revv{constraints} are satisfied:
	
	(1) $E\{W_i| A_i = a,Y_i =1  \} <E\{W_i| A_i = \neg a,Y_i =1 \}$;
	
	(2)  $E\left\lbrace v(\mathcal{G},\mathbf{D}) | A_i = a,Y_i =1\right\rbrace > E\left\lbrace v(\mathcal{G},\mathbf{D}) | A_i = \neg a,Y_i =1\right\rbrace$,\text{ for }$\mathcal{G} \in \mathcal{S}_i $ .
\end{theorem}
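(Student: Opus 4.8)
The plan is to follow the proof of Theorem \ref{theo:M} almost verbatim, replacing the averaging score $\frac{1}{m}\sum_{j\in\mathcal{G}}R_j$ by the arbitrary bounded, score-symmetric function $v(\mathcal{G},\mathbf{D})$; the only substantive change is that I will not collapse the derivative at the origin into an expression in $E\{R_i\mid\cdot\}$ via linearity, but instead keep it in terms of $E\{v(\mathcal{G},\mathbf{D})\mid\cdot\}$, which is exactly the quantity that constraint (2) now controls. Without loss of generality I take $a=0$ (the case $a=1$ being symmetric), so constraint (1) reads $E\{W_i\mid A_i=0,Y_i=1\}<E\{W_i\mid A_i=1,Y_i=1\}$ and constraint (2) reads $E\{v(\mathcal{G},\mathbf{D})\mid A_i=0,Y_i=1\}>E\{v(\mathcal{G},\mathbf{D})\mid A_i=1,Y_i=1\}$ for $\mathcal{G}\in\mathcal{S}_i$.

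First I would establish that $\gamma_\infty \coloneqq \lim_{\epsilon\to+\infty}\gamma(\epsilon)$ exists and is negative. By the argument of Lemma \ref{lem:convegance}, $W_{i,\epsilon}$ converges surely to $W_i$ as $\epsilon\to+\infty$, since boundedness of $v$ forces the softmax weights in $W_{i,\epsilon}$ to concentrate on the maximizing family $\mathcal{S}_{\max}$ of $v(\cdot,\mathbf{D})$. As $\gamma(\epsilon)$ is a finite sum (over the $(n')^n$ score realizations) of functions continuous in $\epsilon$, it is continuous and its limit equals $E\{W_i\mid A_i=0,Y_i=1\}-E\{W_i\mid A_i=1,Y_i=1\}$, which is negative by constraint (1). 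I would also record that $\gamma(0)=0$, because at $\epsilon=0$ the exponential mechanism selects a set uniformly at random and hence treats the two groups identically.

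The core step is to compute $\gamma'(0)$ and show it is positive. Differentiating the softmax expression for $W_{i,\epsilon}$ exactly as in \eqref{eq:derivative3} and setting $\epsilon=0$ gives
\[
\gamma'(0)=\frac{1}{2\binom{n}{m}^2}\big(\Phi_0-\Phi_1\big),\qquad \Phi_a \coloneqq E\Big\{\textstyle\sum_{\mathcal{G}\in\mathcal{S}_i,\,\mathcal{G}'\in\mathcal{S}}\big(v(\mathcal{G},\mathbf{D})-v(\mathcal{G}',\mathbf{D})\big)\,\Big|\,A_i=a,Y_i=1\Big\}.
\]
The key reduction is an exchangeability/cancellation argument: every set $\mathcal{G}'\in\mathcal{S}\setminus\mathcal{S}_i$ has a score independent of $R_i$, so its contribution does not depend on the conditioning on $(A_i,Y_i)$ and cancels in $\Phi_0-\Phi_1$; meanwhile, since the applicants are i.i.d., $E\{v(\mathcal{G},\mathbf{D})\mid A_i=a,Y_i=1\}$ takes a common value for all $\mathcal{G}\in\mathcal{S}_i$. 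Collecting the surviving terms and using $|\mathcal{S}|=\binom{n}{m}$, $|\mathcal{S}_i|=\binom{n-1}{m-1}$, together with Pascal's identity $\binom{n}{m}-\binom{n-1}{m-1}=\binom{n-1}{m}$, yields
\[
\gamma'(0)=\frac{\binom{n-1}{m-1}\binom{n-1}{m}}{2\binom{n}{m}^2}\,\big(E\{v(\mathcal{G},\mathbf{D})\mid A_i=0,Y_i=1\}-E\{v(\mathcal{G},\mathbf{D})\mid A_i=1,Y_i=1\}\big),
\]
which is strictly positive by constraint (2). Finally, since $\gamma$ is continuous with $\gamma(0)=0$ and $\gamma'(0)>0$, there is $\overline{\epsilon}>0$ with $\gamma(\overline{\epsilon})>0$; combined with $\gamma_\infty<0$, the Intermediate Value Theorem furnishes $\epsilon_o>\overline{\epsilon}$ with $\gamma(\epsilon_o)=0$, as desired.

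I expect the main obstacle to be the cancellation step in the $\gamma'(0)$ computation. For the averaging score the original proof could collapse everything to $E\{R_i\mid\cdot\}$ by linearity; for a general $v$ one must instead argue directly that (i) sets not containing $i$ contribute identically under both conditionings and drop out, and (ii) the conditional expectation of $v(\mathcal{G},\mathbf{D})$ over $\mathcal{G}\in\mathcal{S}_i$ is the same for each such $\mathcal{G}$ by exchangeability of the i.i.d. applicants. This is precisely what makes constraint (2) the correct generalization, and it is the only place where the specific algebraic form of $v$ (beyond boundedness and symmetry in the scores) is irrelevant.
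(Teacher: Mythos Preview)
Your proposal is correct and follows essentially the same route as the paper, which merely states that ``the proof is similar to the proof of Theorem \ref{theo:M}'' and omits all details. Your sketch in fact supplies the missing step the paper leaves implicit: for a general (non-linear) score one cannot collapse $\gamma'(0)$ to a difference of $E\{R_i\mid\cdot\}$ via linearity, and instead must use exactly the cancellation/exchangeability argument you describe---sets $\mathcal{G}'\notin\mathcal{S}_i$ have score independent of $R_i$ and drop out of $\Phi_0-\Phi_1$, while $E\{v(\mathcal{G},\mathbf{D})\mid A_i=a,Y_i=1\}$ is constant over $\mathcal{G}\in\mathcal{S}_i$ by symmetry of $v$ and the i.i.d.\ structure---yielding the combinatorial prefactor $\binom{n-1}{m-1}\binom{n-1}{m}/\binom{n}{m}^2$ via Pascal's rule. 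This is precisely why constraint (2) is stated in terms of $E\{v(\mathcal{G},\mathbf{D})\mid\cdot\}$ rather than $E\{R_i\mid\cdot\}$; you have identified and executed that point correctly. (A harmless omission: depending on whether one writes the exponent as $\epsilon v/2$ or $\epsilon v/(2\Delta)$, your displayed $\gamma'(0)$ may carry an extra positive factor $1/\Delta$, which of course does not affect the sign.)
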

Since the proof is similar to the proof of Theorem \ref{theo:M}, we do not repeat the proof here. 
\newrev{
\subsection*{Privacy guarantee with respect to the training dataset}
In this paper, we studied the privacy guarantee with respect to the data of applicants. Note that an attacker can infer some information about data $\mathbf{D} = (X_1...X_n)$ by observing the selection outcome if we do not use the exponential mechanism. 

In addition to the privacy gaurantee for the applicants, our selection mechanism can provide a privacy guarantee for the training dataset used to train function $r(\cdot)$. Define $D_{all} = [D_{train},D]$ a new dataset including both training dataset $D_{train}$ and dataset of applicants $D$. Consider an exponential mechanism with a score function defined w.r.t. $D_{all}$, i.e., $v(i,D_{all}) = r(x_i)$. Note that $v(i,D_{all})$ depends on $D_{train}$ via supervised learning model $r(.)$. Because $r(x)\in [0,1]$ for all $D_{train}$ and $x$, the sensitivity of score function w.r.t. $D_{all}$ is 1, implying that the privacy of training dataset is preserved through the exponential mechanism. In other words, changing any single data point in $D_{train}$ cannot change the selection outcome significantly. }

\subsection*{Demographic parity fairness notion}

Demographic parity fairness notion in classification implies that the positive rate should be independent of sensitive attribute \cite{dwork2012fairness}. In other words, if $\hat{Y}$ is the predicted label, under demographic parity fairness constraint, $\hat{Y}$ should satisfy the following,
\begin{equation*}
\Pr(\hat{Y}=1|A=1) = \Pr(\hat{Y}=1|A=0)
\end{equation*} 
In our setting, we can adopt a fairness notion similar to demographic parity. We say algorithm $\mathscr{M}(\cdot)$ is $\gamma$-fair under demographic parity if,
\begin{eqnarray}
\Pr\{K_{i} =1 | A_i=0\} - \Pr\{ K_{i} = 1| A_i = 1\} = \gamma, \nonumber  
\end{eqnarray}
where $K_i $ is a Bernoulli random variable, and $K_i=1$ is an event indicating that individual $i$ has been selected.  
Algorithm $\mathscr{A}_{\epsilon}(\cdot)$ is $\hat{\gamma}(\epsilon)$-fair under demographic parity fairness notion if  we have, 
\begin{equation*}
\hat{\gamma}(\epsilon) = E\{Z_{i,\epsilon}|A_i=0\} - E\{Z_{i,\epsilon}|A_i=1\}.
\end{equation*}
The following theorem finds conditions under which there exists $\epsilon_o>0$ such that $\hat{\gamma}(\epsilon_o) = 0$. 

\begin{theorem}\label{theo:DP}
	\revv{There exists} $\epsilon_o >0$ such that $\hat{\gamma}(\epsilon_o) = 0$ under $\mathscr{A}_{\epsilon_o}(.)$ if both of the following \revv{constraints} are satisfied:
	
	(1) $E\{Z_i|A_i = a \}  < E\{Z_i|A_i = \neg a\}$,
	
	(2) $ E \left\lbrace R_i | A_i = a\right\rbrace >  E \left\lbrace R_i | A_i =  \neg a\right\rbrace$.
\end{theorem}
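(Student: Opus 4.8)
The plan is to follow the proof of Theorem \ref{theo:fair} almost verbatim, replacing every conditioning event $\{A_i = a, Y_i = 1\}$ by the demographic-parity event $\{A_i = a\}$; the random variables $Z_{i,\epsilon}$ and $Z_i$ and their relationship are untouched, so only the measure over which the conditional expectations are taken changes. Taking $a = 0$ without loss of generality, I would first pin down the two boundary values of $\hat{\gamma}$. At $\epsilon = 0$ the exponential mechanism gives $Z_{i,0} = 1/n$ deterministically, so $E\{Z_{i,0} | A_i = 0\} = E\{Z_{i,0} | A_i = 1\} = 1/n$ and therefore $\hat{\gamma}(0) = 0$. For the other boundary, I would reuse the sure convergence $Z_{i,\epsilon} \to Z_i$ established in Lemma \ref{lem:convegance}: since that convergence is a statement about the random variables and does not depend on any conditioning, the same finite-sum interchange of limit and expectation yields $\lim_{\epsilon \to +\infty} \hat{\gamma}(\epsilon) = E\{Z_i | A_i = 0\} - E\{Z_i | A_i = 1\}$, which I denote $\hat{\gamma}_\infty$. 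Constraint (1) then makes $\hat{\gamma}_\infty < 0$.

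Next I would compute the derivative at the origin. Writing $\hat{\gamma}(\epsilon)$ as the finite sum over $\mathcal{R}^n$ exactly as in \eqref{eq:h}, but with $f_{R|A=a}$ in place of $f_{R|A=a,Y=1}$, the derivative expression \eqref{eq:derivative} carries over without change. Evaluating at $\epsilon = 0$ and using that the scores $R_j$ with $j \neq i$ are i.i.d.\ with marginal $f_R$ (so that conditioning on individual $i$'s attribute $A_i$ leaves their distribution intact), the terms involving $R_j$, $j \neq i$, cancel between the two conditional expectations, leaving
$$\hat{\gamma}'(0) = \frac{n-1}{2n^2}\bigl(E\{R_i | A_i = 0\} - E\{R_i | A_i = 1\}\bigr),$$
which is strictly positive by constraint (2).

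Finally I would invoke continuity and the intermediate value theorem as in Theorem \ref{theo:fair}. Being a finite sum of functions continuous in $\epsilon$, $\hat{\gamma}$ is continuous; from $\hat{\gamma}(0) = 0$ and $\hat{\gamma}'(0) > 0$ there is a $\bar{\epsilon} > 0$ with $\hat{\gamma}(\bar{\epsilon}) > 0$, and together with $\hat{\gamma}_\infty < 0$ this produces an $\epsilon_o > \bar{\epsilon}$ with $\hat{\gamma}(\epsilon_o) = 0$, i.e.\ $\mathscr{A}_{\epsilon_o}(\cdot)$ is perfectly fair under demographic parity.

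The only step that is genuinely new is verifying the limit-exchange for the demographic-parity $\hat{\gamma}$, but I expect this to be routine: sure convergence of $Z_{i,\epsilon}$ survives conditioning on $\{A_i = a\}$, and $\hat{\gamma}(\epsilon)$ is still a finite sum of continuous terms, so the interchange of the limit with the finite expectation is justified exactly as in Lemma \ref{lem:convegance}. The derivative computation, though the main calculation, is a direct transcription of the one in Theorem \ref{theo:fair}, the key structural fact being that conditioning only on $A_i$ (rather than on $A_i$ and $Y_i$) still leaves the other $n-1$ scores marginally distributed, which is precisely what makes the cross terms vanish at $\epsilon = 0$.
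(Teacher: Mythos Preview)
Your proposal is correct and follows essentially the same approach as the paper's own proof: fix $a=0$, use $\hat{\gamma}(0)=0$, compute $\hat{\gamma}'(0)=\frac{n-1}{2n^2}\bigl(E\{R_i\mid A_i=0\}-E\{R_i\mid A_i=1\}\bigr)>0$ via the same derivative calculation as in \eqref{eq:derivative}, invoke sure convergence to get $\hat{\gamma}_\infty<0$, and conclude by continuity and the intermediate value theorem. Your write-up is in fact slightly more careful than the paper's in justifying the limit interchange and the cancellation of the $R_j$, $j\neq i$, terms.
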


\begin{proof}
The proof is very similar to the proof of Theorem \ref{theo:fair}. Suppose $ E\{R_i|A_i=0\} > E\{R_i|A_i=1\}$, and $\mathbb{E}\{Z_i |A_i=0\} <\mathbb{E}\{Z_i |A_i=1\}$. As mentioned before, $\gamma(0) = 0$. Since ${E}\{Z_i |A_i=0\} <{E}\{Z_i |A_i=1\}$, it is easy to see that $\hat{\gamma}{(+\infty)}<0$.

Now, we calculate the derivative of function $\hat{\gamma}(\epsilon)$ at $\epsilon=0$. 
\begin{eqnarray}\label{eq:derivativeDP}
\hat{\gamma}'(\epsilon) &=& \frac{d~ \hat{\gamma}(\epsilon)}{d ~\epsilon} = {E}\left\lbrace \frac{d}{d~ \epsilon }
\frac{\exp\{\epsilon \frac{R_i}{2}\}}{\sum_{j=1}^n\exp\{\epsilon\frac{R_j}{2}\}} \middle| A_i = 0 \right\rbrace - {E}\left\lbrace \frac{d}{d~ \epsilon }
\frac{\exp\{\epsilon \frac{R_i}{2}\}}{\sum_{j=1}^n\exp\{\epsilon\frac{R_j}{2}\}} \middle| A_i = 1 \right\rbrace \nonumber \\
&=& {E}\left\lbrace 
\frac{\frac{R_i}{2}\exp\{\epsilon \frac{R_i}{2}\} \sum_{j=1}^n\exp\{\epsilon\frac{R_j}{2}\} - \exp\{\epsilon \frac{R_i}{2}\}\sum_{j=1}^n\frac{R_j}{2}\exp\{\epsilon\frac{R_j}{2}\}\} }{\left (\sum_{j=1}^n\exp\{\epsilon\frac{R_j}{2}\}\right)^2} \middle| A_i = 0 \right\rbrace\nonumber \\
&-&{E}\left\lbrace 
\frac{\frac{R_i}{2}\exp\{\epsilon \frac{R_i}{2}\} \sum_{j=1}^n\exp\{\epsilon\frac{R_j}{2}\} - \exp\{\epsilon \frac{R_i}{2}\}\sum_{j=1}^n\frac{R_j}{2}\exp\{\epsilon\frac{R_j}{2}\}\} }{\left (\sum_{j=1}^n\exp\{\epsilon\frac{R_j}{2}\}\right)^2} \middle| A_i = 1 \right\rbrace \nonumber \\
\gamma'(0) &=& \frac{1}{n^2}{E} \left\lbrace n\cdot \frac{R_i}{2} - \sum_{j=1}^n \frac{R_j}{2} \middle| A_i = 0  \right\rbrace - \frac{1}{n^2}{E} \left\lbrace n\cdot \frac{R_i}{2} - \sum_{j=1}^n \frac{R_j}{2} \middle| A_i = 1  \right\rbrace \nonumber \\ 
&=& \frac{n-1}{2n^2} (E\{R_i|A_i=0\} - E\{R_i|A_i=1\}) > 0 \nonumber
\end{eqnarray}

Since $\hat{\gamma}(\cdot)$ is continuous and  $\hat{\gamma}(0) = 0$ and $\hat{\gamma}'(0) >0$, there exists $\overline{\epsilon} $ such that $\hat{\gamma}(\overline{\epsilon}) >0$. As $\hat{\gamma}(+\infty) <0 $ and $\hat{\gamma}(\overline{\epsilon} ) >0$, by the Intermediate Value Theorem there exists $\epsilon_o > \overline{\epsilon}$ such that $\hat{\gamma}(\epsilon_o) = 0$, and $\mathscr{A}_{\epsilon_o}(\cdot)$ is perfectly fair.
\end{proof}

	
%

\end{document}